\newtheorem{thm}{Theorem}
\newtheorem{lemma}{Lemma}
\theoremstyle{definition}
\newtheorem{definition}{Definition}[section]
\journal{Elsevier}
\begin{document}
\begin{frontmatter}



\title{LSTSVR-PI: Least square twin support vector regression with privileged information}


\author[inst1]{Anuradha Kumari}

\affiliation[inst1]{organization={Department of Mathematics, Indian Institute of Technology Indore},
            addressline={Simrol}, 
            city={Indore},
            postcode={453552}, 
            state={Madhya Pradesh},
            country={India}}

\author[inst1]{M. Tanveer\corref{mycorrespondingauthor}}


\cortext[mycorrespondingauthor]{Corresponding author}
\begin{abstract}
In an educational setting, a teacher plays a crucial role in various classroom teaching patterns. Similarly, mirroring this aspect of human learning, the learning using privileged information (LUPI) paradigm introduces additional information to instruct learning models during the training stage. A different approach to train the twin variant of the regression model is provided by the new least square twin support vector regression using privileged information (LSTSVR-PI), which integrates the LUPI paradigm to utilise additional sources of information into the least square twin support vector regression. The proposed LSTSVR-PI solves system of linear equations which adds up to the efficiency of the model. Further, we also establish a generalization error bound based on the Rademacher complexity of the proposed model and incorporate the structural risk minimization principle. The proposed LSTSVR-PI fills the gap between the contemporary paradigm of LUPI and classical LSTSVR. Further, to assess the performance of the proposed model, we conduct numerical experiments along with the baseline models across various artificially generated and real-world datasets. The various experiments and statistical analysis infer the superiority of the proposed model. Moreover, as an application, we conduct experiments on time series datasets, which results in the superiority of the proposed LSTSVR-PI. \\

\end{abstract}



\begin{keyword}
Support vector regression \sep Privileged information \sep Time-series dataset  

\end{keyword}

\end{frontmatter}


\section{Introduction}
Support vector machine (SVM) \cite{vapnik1999overview} stands out as a powerful machine-learning algorithm, renowned for its robust mathematical foundation. It finds applications across diverse domains, including cybersecurity \cite{deylami2012adaboost}, remote sensing \cite{chen2015robust}, medical imaging \cite{liu2018robust}, sediment analysis \cite{hazarika2020modeling}, and beyond. Initially designed for classification tasks, SVMs have evolved to address regression problems, leading to the development of support vector regression (SVR) \cite{smola2004tutorial}. SVR serves the purpose of identifying optimal regressors for regression problems, incorporating two sets of constraints that effectively partition the data samples within the $\epsilon$-insensitive region \cite{drucker1996support}. SVR finds extensive applications in real-world scenarios, including cement industries \cite{pani2014soft}, travel-time prediction \cite{wu2004travel}, location estimation \cite{wu2007location}, and various other domains. 

Despite the strong generalization performance and strong mathematical foundation with a unique global solution, SVM solves a quadratic programming problem (QPP) which has a high time complexity of $O(m^3)$, $m$ corresponds to the number of training samples. To address this, a classifier featuring non-parallel hyperplanes known as twin support vector machine (TSVM) \cite{khemchandani2007twin} is introduced. TSVM forms separate hyperplanes for each class and solves two QPPs resulting in approximately four times more efficiency than SVM \cite{khemchandani2007twin}. The comprehensive review by \citet{tanveer2022comprehensive} delves into different variants of TSVM. \par
Motivated by the concept of TSVM, \citet{peng2010tsvr} introduced the twin support vector machine for regression (TSVR). TSVR, akin to TSVM, employs non-parallel hyperplanes and establishes $\epsilon$-insensitive lower and upper bound functions by addressing two QPPs, with each QPP being smaller than that of SVR. Further, different variants of TSVR are introduced such as ($\epsilon$-TSVR) \cite{shao2013varepsilon}, weighted TSVR \cite{xu2012weighted}, twin projection SVR \cite{peng2014twin} and so on. The aforementioned variants solve a pair of QPPs which has high time and memory complexity. To overcome the high computational burden twin least square SVR (TLSSVR) \cite{zhao2013twin} is introduced. TLSSVR replaces the inequality constraints of TSVR with equality constraints, leading to the solution of a system of linear equations to train the model. Further different least square variants of TSVR are proposed such as primal least square TSVR (PLSTSVR) \cite{huang2013primal}, which solves the system of equations in the primal space, LSTSVR and sparse LSTSVR (FS-LSTSVR) \cite{huang2016sparse}, which results in a sparse solution of LSTSVR.  \par   


\citet{bi2003geometric} discussed the geometric approach of SVR by viewing regression problems as classification problems aimed at distinguishing between the overestimates and underestimates of regression data. Further, \citet{khemchandani2016twsvr} extended the idea of \cite{bi2003geometric} for twin variants and proposed regression via TSVM (TWSVR). In \cite{khemchandani2016twsvr}, authors claim that TSVR \cite{peng2010tsvr} is not in the true spirit of TSVM. In each of the twin optimization problems, TSVR either addresses the upbound or downbound regressor, whereas, the TWSVR model adheres to TSVM  principles, where the objective of each QPP has up (down) bound regressor with constraints of down (up) bound regressor.      
Recently, \citet{huang2022overview} provided an overview of TSVR, which comprehensively covers different variants of the twin regression approach. Further, to address the heterogeneity and asymmetry in the training data twin support vector quantile regression (TSVQR) \cite{ye2023twin} is introduced. TSVQR efficiently represents the heterogeneous distribution information with regard to all components of the data points using a quantile parameter. Along with the rapid growth in the variants of TSVR, it has wide applications in prediction such as wind speed \cite{hazarika2022wavelet}, brain age \cite{ganaiea2022brain}, traffic flow \cite{nidhi2022traffic} and so forth. \par

 \citet{vapnik2009new} introduced a novel learning paradigm known as learning using privileged information (LUPI), to improve the generalization performance of learning models. In the conventional supervised learning paradigm, only the training data is available in the training stage. However, LUPI deviates from this norm by providing additional information for the training data during the training stage.  This supplementary information is referred to as privileged information. Importantly, this privileged information is not available during the test stage. There is an analogy between LUPI and the human learning process. In a classroom, a teacher can impart crucial and helpful information about the course to students, facilitating better knowledge acquisition. Similarly, in the LUPI paradigm, models can leverage additional information during training, akin to the role of a teacher, ultimately leading to improved generalization performance compared to traditional learning models. The utilization of PI extends to diverse fields including data clustering \cite{feyereisl2012privileged}, visual recognition \cite{motiian2016information}, multi-view learning \cite{tang2019coupling}, face verification \cite{xu2015distance} and so on. In the context of person re-identification, PI is employed \cite{yang2017person} to establish a locally adaptive decision rule. \par
The integration of PI with various machine-learning techniques has led to notable advancement. 
One such approach is SVM+ introduced by \citet{vapnik2009new}, which formulates an SVM optimization problem within the LUPI setting. However, SVM+ employs the $l_2$ norm, doubling the number of parameters and resulting in extended time for parameter tuning. In efforts to enhance computational efficiency, \citet{niu2012nonlinear} proposed SVM+ with the $l_1$ norm as an alternative to $l_2$. This modification streamlines the optimization process and reduces computational complexity. While both variants of SVM+ with $l_1$ and $l_2$ norms exhibit good generalization performance, it's important to note that their performance can be impacted in the presence of noise in the data, potentially leading to suboptimal results. To address these challenges a robust SVM+ model (R-SVM+) is introduced by \citet{li2018r}. Additionally, \citet{che2021twin} integrated PI with TSVM and introduced TSVM-PI which demonstrates improved computational efficiency and employs two non-parallel hyperplanes, presenting itself as a compelling alternative to SVM+. The utilization of PI in the training process has further enhanced the performance of various classification models such as kernel ridge regression-based auto-encoder for one-class classification (AEKOC) \cite{gautam2017construction}, kernel ridge regression-based one-class classifier (KOC) \cite{vovk2013kernel}, random vector functional link neural network (RVFL) \cite{malik2023random}, and introduced AEKOC+ \cite{gautam2020aekoc+}, KOC+ \cite{gautam2019koc+} and RVFL+ \cite{zhang2020new}, respectively. \par

  Taking inspiration from the improved performance of models utilizing PI and non-parallel hyperplanes constructed in TSVM-PI for classification, in this paper, we propose least square twin support vector regression with privileged information (LSTSVR-PI).  The main highlights of the proposed algorithm can be listed as follows:
\begin{itemize}
    \item The proposed LSTSVR-PI integrates PI into LSTSVR and provides extra information for training along with enhancing the generalization performance of the model. This represents the initial application of PI within the framework of the least square variant of a non-parallel hyperplane regressor.
    \item We employ the regularization terms in the proposed optimization problems corresponding to both the regressor function and correcting function that adheres to the SRM principle and prevents overfitting.  
    \item The proposed LSTSVR-PI solves a system of linear equations, leading to an efficient model with less computational burden.  
    \item The numerical experimental results and statistical analysis carried out over artificially generated datasets and real-world datasets demonstrate the superiority of the proposed model. The proposed model has a real-world application on time-series datasets. 
\end{itemize}
 
The rest of the paper is organized as follows: Section \ref{sec:related_work} briefly discusses the related work. The linear and non-linear case of the proposed LSTSVR-PI is thoroughly discussed in Section \ref{sec:proposed work}.  Numerical experiments are analyzed in Section \ref{sec:exp}. The application of the proposed LSTSVR-PI is discussed in Section \ref{sec:application}. Section \ref{sec:conclusion} concludes the paper with future work.

\section{Related work}{\label{sec:related_work}}
In this section, we briefly discuss the existing models, specifically SVR, SVR+ and TSVR, which are in close relation with the proposed LSTSVR-PI. We mainly discuss the non-linear case of the models. Let S$=\{(x_1,x_1^*,y_1), (x_2,x_2^*,y_2),$\\
$\ldots,(x_m,x_m^*,y_m)\}$ denote the training samples, where $x_i \in \mathbb{R}^n$, $x_i^* \in \mathbb{R}^n$  and $y \in \mathbb{R}$. For the $i^{th}$ data point $x_i$ corresponds to the training features, $x_i^*$ corresponds to the privileged information and $y_i$ corresponds to the target value. Assume $M$ and $M^*$ denote matrices containing training input data and their privileged information, respectively. $Y$ denotes the vector containing the target values of input data. Assume $\mathscr{\phi}$ be the nonlinear map from input space $(\mathbb{R}^n)$ to higher dimensional space, and  $K(x_i,x_j^T)=\phi(x_i) \cdot \phi(x_j^T)$ corresponds to the kernel matrix. 

\subsection{Support vector regression \cite{smola2004tutorial}}
The classical SVR employs an $\epsilon$ -insensitive loss function with the objective of discovering a regressor function that allows a maximum deviation of $\epsilon$  from the actual target values of the training data. In the non-linear case, the objective is to find the flattest function within the feature space, rather than in the input space. The equation of the regressor function for the non-linear case is given as follows:
\begin{align} \label{eq:decision}
    r(x)=u^T\phi(x)+b,
\end{align}
 where $u$ and $b$ are the parameters of the regression function. The optimization problem for non-linear SVR is expressed as:
 \begin{align}\label{qpp:SVR}
     & \underset{u, \zeta_1,\zeta_2}{\text{min }} \frac{1}{2}\norm{u}^2+c_1(e^T\zeta_1+e^T \zeta_2) \nonumber \\
     & \text{such that} ~Y-(\phi(M)u+be) \leq \epsilon e+\zeta_1, ~~\zeta_1 \geq 0, \\
     & ~~~~~~~~~~~~~ (\phi(M)u+be)-Y \leq \epsilon e+\zeta_2,~~\zeta_2 \geq 0. \nonumber 
 \end{align}
Here, $c_1>0$ is the trade-off parameter between the penalty term and flatness of regressor. $\zeta_1$ and $\zeta_2$ are the vectors of slack variables. $e$ denotes a vector of ones having a suitable dimension. By employing the equation (\ref{eq:decision}) with the unknowns obtained from QPP (\ref{qpp:SVR}), the target value of the unseen samples can be determined. 

\subsection{SVR+  \cite{vapnik2009new}}
Utilizing PI, \citet{vapnik2009new} proposed SVM+ regression (SVR+), where the available PI is incorporated into SVR by using the correction function for slack variables. The model is trained using both the original feature as well as additional information from the training data samples. The two correcting functions of the SVR+ are given by: 
\begin{align*}
    r_1^*(x^*)={u_1^*}^T \phi(x^*)+b_1^*   \text{ and }   r_2^*(x^*)={u_2^*}^T \phi(x^*)+b_2^*.
\end{align*}
Here, $r_1^*(x)$ and $r_2^*(x)$ are the correcting functions for the approximation of slack variables $\zeta_1$ and $\zeta_2$, respectively. The regressor function is given by equation (\ref{eq:decision}). The optimization problem of SVR+ is expressed as:
\begin{align}
    \underset{u, u_1^*,u_2^*,b,b_1^*,b_2^*}{\text{min }}&\frac{1}{2}[\norm{u}^2+c_1(\norm{u_1^*}^2+\norm{u_2^*}^2)]+c_2[({u_1^*}^T\phi(M^*)+b_1^*e)+({u_2^*}^T\phi(M^*)+b_2^*e)] \nonumber \\
    \text{such that } & Y-(\phi(M)u+be) \leq \epsilon e+( \phi(M^*){u_1^*}+b_1^*e), \nonumber \\
     & (\phi(M)u+be)-Y \leq \epsilon e+( \phi(M^*){u_2^*}+b_2^*e), \\
     &( \phi(M^*){u_1^*}+b_1^*e) \geq 0, \nonumber \\
    &( \phi(M^*){u_2^*}+b_2^*e) \geq 0. \nonumber
\end{align}
Here $e$ signifies a column vector of ones with appropriate dimensions. $c_1$, $c_2$ corresponds to the positive regularization terms. One can obtain the unknowns $u,u_1^*,u_2^*,b,b_1^*,b_2^*$ using the dual problem, explained in \cite{vapnik2009new}. 

\subsection{TSVR \cite{peng2010tsvr}}
TSVR constructs non-parallel hyperplanes akin to TSVM, but they constitute distinct geometric and structural properties. Firstly, while TSVM aims to find hyperplanes close to each class, TSVR identifies $\epsilon$-insensitive down and upbound regressor functions. Secondly, in TSVM, the objective function of each QPP relies solely on the data points from one class, whereas TSVR incorporates all data points from both classes into its QPPs. These distinctions highlight the unique characteristics and objectives of each approach. The $\epsilon_1$- insensitive down bound regressor and $\epsilon_2$-insensitive up bound regressor for non-linear cases are given by:
\begin{align}\label{eq:secision_TSVR}
r_1(x)=K(x^T,M^T)u_1+b_1 \text{ and } r_2(x)=K(x^T,M^T)u_2+b_2, 
\end{align}
respectively. The optimization problem for TSVR is written as:
\begin{align}
    \underset{u_1,b_1,\zeta_1}{\text{min }}& \frac{1}{2}\norm{Y-e \epsilon_1-(K(M,M^T)u_1+eb_1)}^2 +c_1e^T\zeta_1  \nonumber \\
    \text{such that } & Y-(K(M,M^T)u_1+eb_1) \geq e\epsilon_1-\zeta_1,~~\zeta_1 \geq 0,
\end{align}
and
\begin{align}
    \underset{u_2,b_2,\zeta_2}{\text{min }}& \frac{1}{2} \norm{Y+e \epsilon_2-(K(M,M^T)u_2+eb_2)}^2 +c_2e^T\zeta_2  \nonumber \\
    \text{such that } & (K(M,M^T)u_2+eb_2)-Y \geq e\epsilon_2-\zeta_2,~~\zeta_2 \geq 0,
\end{align}
where $\zeta_1$ and $\zeta_2$ are the slack vectors; $c_1$, $c_2$ are positive regularization terms. 
Thus, by solving two smaller-sized QPPs, TSVR demonstrates greater computational efficiency compared to SVR.

For an unseen data point, the target is given by the function 
\begin{align} \label{eq:dectsvr}
    r(x)=\frac{1}{2}(r_1(x)+r_2(x))=\frac{1}{2}(u_1+u_2)^TK(M,x)+\frac{1}{2}(b_1+b_2).
\end{align}


\section{Proposed Work}{\label{sec:proposed work}}
The conventional LSTSVR relies solely on the training features for learning. However, learning in the presence of PI enhances the generalization performance of the model. The LUPI paradigm leverages additional information during the training phase. Building upon this concept and integrating PI into LSTSVR, in this section, we propose least square twin support vector regression with privileged information (LSTSVR-PI). We thoroughly discuss the linear and non-linear variants of the proposed LSTSVR-PI. In accordance with \cite{vapnik2009new}, the slack variable is formed as the correcting function for incorporating PI into the proposed optimization problems.

\subsection{Proposed LSTSVR-PI: linear case}
In this subsection, we discuss the linear case of the proposed LSTSVR-PI. The equations of two $\epsilon$-insensitive proximal linear functions are given as:
\begin{align}
    r_1(x)=u_1^Tx+b_1 \text{ and } r_2(x)=u_2^Tx+b_2, 
\end{align}
and correction functions of each linear function are given as: 
\begin{align}
    p_1(x^*)={u_1^*}^Tx^*+b_1^* \text{ and } p_2(x^*)={u_2^*}^Tx^*+b_2^*,
\end{align}
respectively. Here, $u_i$, $u_i^* \in \mathbb{R}^n$ and $b_i,~b_i^* \in \mathbb{R}$ for $(i=1,~2)$. The optimization problems for the linear case can be expressed as:
\begin{align}{\label{eq:primalpropo1}}
    \underset{u_1,b_1,u_1^*, b_1^*}{\text{min }}& \frac{c_1}{2} (\norm{u_1}^2+b_1^2)+ \frac{c_2}{2}(\norm{u_1^*}^2+{b_1^*}^2) \nonumber\\ 
    &+ \frac{1}{2} \norm{Y-(Mu_1+eb_1)}^2+c_3e^T(M^*u_1^*+eb_1^*) \nonumber \\
    \text{such that } & Y-(Mu_1+eb_1) = -\epsilon_1 e-(M^*u_1^*+eb_1^*)   
\end{align}
and 
\begin{align}{\label{eq:primalpropo2}}
    \underset{u_2,b_2,u_2^*, b_2^*}{\text{min }}& \frac{c_4}{2} (u_2^Tu_2+b_2^2)+ \frac{c_5}{2}({u_2^*}^Tu_2^*+{b_2^*}^2) \nonumber\\ 
    &+  \frac{1}{2}\norm{((Mu_2+eb_2)-Y)}^2+c_6e^T(M^*u_2^*+eb_2^*) \nonumber \\
    \text{such that } & (Mu_2+eb_2)-Y = -\epsilon_2 e-(M^*u_2^*+eb_2^*).
\end{align}
Here, $c_i$ $(i=1,2,3,4,5,6)$ corresponds to the positive regularization parameters. $e$ signifies a column vector of the appropriate dimension. The first term in the optimization problem (\ref{eq:primalpropo1}) is the regularization term corresponding to the first proximal linear function; the second term leads to the regularization term corresponding to the correcting function; the third term signifies the sum of the squared distance between the estimated function and the actual targets. The fourth term corresponds to the correcting function values using the PI of data. The regularization terms in the optimization problem (\ref{eq:primalpropo1}) result in the flatness of the function $r_1(x)$ and $p_1(x^*)$. The significance of the terms in the optimization problem (\ref{eq:primalpropo2}) can be explained similarly. 

In order to solve the problems (\ref{eq:primalpropo1}) and (\ref{eq:primalpropo2}), convert them to their dual forms using the Lagrangian function. For problem  (\ref{eq:primalpropo1}), it can be expressed as:
\begin{align}{\label{eq:Lagrangian1}}
    \mathscr{L}=&\frac{c_1}{2}(\norm{u_1}^2+b_1^2)+ \frac{c_2}{2}(\norm{u_1^*}^2+{b_1^*}^2)+ \frac{1}{2}{\norm{Y-(Mu_1+eb_1)}}^2 \nonumber \\
    &+c_3e^T(M^*u_1^*+eb_1^*)+\alpha^T(Y-(Mu_1+eb_1)+\epsilon e+(M^*u_1^*+eb_1^*)),
\end{align}
where $\alpha$ corresponds to the Lagrangian multipliers. Using the Karush Kuhn Tucker (K.K.T.) conditions, differentiating equation (\ref{eq:Lagrangian1}) with respect to $u_1,~b_1,~u_1^*,~b_1^*$, we get:
\begin{align}
    \frac{\partial \mathscr{L}}{\partial u_1}=&c_1 u_1-M^T(Y-(Mu_1+eb_1))-M^T \alpha=0, \label{eq:lag1}\\
    \frac{\partial \mathscr{L}}{\partial b_1}=&c_1b_1-e^T(Y-(Mu_1+eb_1))-e^T \alpha=0, \label{eq:lag2} \\
    \frac{\partial \mathscr{L}}{\partial u_1^*}=&c_2 u_1^*+c_3 {M^*}^Te +{M^*}^T\alpha=0 \label{eq:lag3}\\
    \frac{\partial \mathscr{L}}{\partial b_1^*}=&c_2 b_1^*+c_3 {e}^Te -{e}^T\alpha=0 \label{eq:lag4} 
\end{align}
Combining equations (\ref{eq:lag1}) and (\ref{eq:lag2}), we obtain
\begin{align}{\label{eq:pro1}}
    c_1v_1-G^T(Y-Gv_1)-G^T \alpha=0,
\end{align}
where $v_1=
\begin{bmatrix}
    u_1 \\
    b_1
\end{bmatrix}$ and $G=
 \begin{bmatrix}
     M  & e
 \end{bmatrix}$.
Similarly, combining equations (\ref{eq:lag3}) and (\ref{eq:lag4}), we get
\begin{align}{\label{eq:pro2}}
    c_2v_1^*+c_3{G^*}^Te+{G^*}^T\alpha=0,
\end{align}
where $v_1^*=\begin{bmatrix}
    u_1^* \\
    b_1^*
\end{bmatrix}$ and $G^*=\begin{bmatrix}
    M^* & e
\end{bmatrix}$. From the constraint of the optimization problem (\ref{eq:primalpropo1}), we have 
\begin{align}{\label{eq:pro3}}
    Y-Gv_1+\epsilon e+G^*v_1^*=0.
\end{align}
Solving equations (\ref{eq:pro1}), (\ref{eq:pro2}) and (\ref{eq:pro3}), we obtain\\
\begin{align}{\label{eq:determine_alpha}}
    \alpha=&(GG^T+\frac{c_1}{c_2}G^*{G^*}^T+\frac{1}{c_2}GG^TG^*{G^*}^T)^{-1}[c_1Y+c_1 \epsilon_1 e \nonumber \\
    & -\frac{c_1c_3}{c_2}G^*{G^*}^Te+GG^T\epsilon_1 e-\frac{c_3}{c_2}GG^TG^*{G^*}^Te ]
\end{align}
Following the same procedure as above for the optimization problem (\ref{eq:primalpropo2}) leads to the following equations: 
\begin{align}
    c_4v_2+G^T(Gv_2-Y)+G^T\beta&=0, \label{eq:prob21}\\
    c_5v_2^*+c_6{G^*}^Te+{G^*}^T\beta&=0,\label{eq:prob22}\\
    Gv_2-Y+\epsilon_2e+G^*v_2^*&=0, \label{eq:prob23}
\end{align}
where $\beta$ corresponds to the Lagrangian multiplier. From aforementioned equations (\ref{eq:prob21})-(\ref{eq:prob23}), we get
\begin{align}{\label{eq:determine_beta}}
    \beta=&(GG^T+\frac{c_4}{c_5}G^*{G^*}^T+\frac{1}{c_5}GG^TG^*{G^*}^T)^{-1}(-c_4Y+c_4 \epsilon_2 e \nonumber \\
    &-\frac{c_4c_6}{c_5}G^*{G^*}^Te+GG^T\epsilon_2 e-\frac{c_6}{c_5}GG^TG^*{G^*}^Te).
\end{align}
After determining $\alpha$, $\beta$ from equations (\ref{eq:determine_alpha}) and (\ref{eq:determine_beta}), respectively, the parameters of the hyperplanes $r_1(x)$ and $r_2(x)$ obtained using equations (\ref{eq:pro1}) and (\ref{eq:prob21}) are as follows: 
\begin{align}
    v_1&=(G^TG+c_1I)^{-1}G^T(Y+\alpha),\\
    v_2&=(G^TG+c_4I)^{-1}G^T(Y-\beta),
\end{align}
respectively.
The target value corresponding to an unknown sample is given by the function:
\begin{align*}
      r(x)=\frac{1}{2}(r_1(x)+r_2(x))=\frac{1}{2}x^T(u_1+u_2)+\frac{1}{2}(b_1+b_2).
\end{align*}

\subsection{Proposed LSTSVR-PI: Non-Linear case}
The presence of inseparable data points, which can occur when linear separation is not feasible, can be addressed using the proposed non-linear LSTSVR-PI.  
The $\epsilon_1$-insensitive down bound regressor and $\epsilon_2$-insensitive up bound regressor for non-linear cases are given by: 
\begin{align}\label{eq:decision_NTSVR}
\tilde{r}_1(x)=K(x^T,M^T)u_1+b_1 \text{ and } \tilde{r}_2(x)=K(x^T,M^T)u_2+b_2, 
\end{align}
respectively. The non-linear correcting function for each hyperplane is given as:
\begin{align}\label{eq:correcting_NTSVR}
    \tilde{p}_1(x^*)=K({x^*}^T,{M^*}^T){u_1^*}+b_1^* \text{ and }  \tilde{p}_2(x^*)=K({x^*}^T,{M^*}^T){u_2^*}+b_2^*,
\end{align}
respectively. As discussed for the linear case, the optimization problems for the proposed LSTSVR-PI are written as:
   \begin{align}{\label{eq:primal_nonlpropo1}}
    \underset{u_1,b_1,u_1^*, b_1^*}{\text{min }}& \frac{c_1}{2} (\norm{u_1}^2+b_1^2)+ \frac{c_2}{2}(\norm{u_1^*}^2+{b_1^*}^2) \nonumber\\ 
    &+ \frac{1}{2}\norm{Y-(K(M,M^T)u_1+eb_1)}^2
    +c_3e^T(K(M^*,{M^*}^T)u_1^*+eb_1^*) \nonumber \\
    \text{such that } & Y-(K(M,M^T)u_1+eb_1) = -\epsilon_1 e-(K(M^*,{M^*}^T)u_1^*+eb_1^*), 
\end{align}
and 
\begin{align}{\label{eq:primal_nonlpropo2}}
    \underset{u_2,b_2,u_2^*, b_2^*}{\text{min }}& \frac{c_4}{2} (\norm{u_2}^2+b_2^2)+ \frac{c_5}{2}(\norm{u_2^*}^2+{b_2^*}^2) +  \frac{1}{2}\norm{(K(M,M^T)u_2+eb_2)-Y} \nonumber \\
    &+c_6e^T(K(M^*,{M^*}^T)u_2^*+eb_2^*) \nonumber \\
    \text{such that } & (K(M,M^T)u_2+eb_2)-Y = -\epsilon_2 e-(K(M^*,{M^*}^T)u_2^*+eb_2^*).
\end{align} 
The significance of the terms in the primal problems (\ref{eq:primal_nonlpropo1}) and (\ref{eq:primal_nonlpropo2}) follows in a similar way as in the linear case. 
The Lagrangian function for problem (\ref{eq:primal_nonlpropo1}) is as follows:
\begin{align}{\label{eq:Lagrangian}}
    \mathscr{L}=&\frac{c_1}{2}(\norm{u_1}^2+b_1^2)+ \frac{c_2}{2}(\norm{u_1^*}^2+{b_1^*}^2)+ \frac{1}{2}{\norm{Y-(K(M,M^T)u_1+eb_1)}}^2 \nonumber \\
    &+c_3e^T(K(M^*,{M^*}^T)u_1^*+eb_1^*)+\gamma^T(Y-(K(M,M^T)u_1+eb_1)+\epsilon_1 e\nonumber \\
    &+K(M^*,{M^*}^T)u_1^*+eb_1^*),
\end{align}
where $\gamma$ is the Lagrangian multiplier. Moving similar to the linear case, we obtain 
\begin{align}
 c_1v_1-G^T(Y-Gv_1)-G^T \gamma&=0,\label{eq:prob31}\\
   c_2v_1^*+c_3{G^*}^Te+{G^*}^T\gamma&=0,\label{eq:prob32}\\
    Y-Gv_1+\epsilon e+G^*v_1^*&=0 \label{eq:prob33}
\end{align} for the optimization problem (\ref{eq:primal_nonlpropo1}). Likewise the following equations are obtained for the optimization problem (\ref{eq:primal_nonlpropo2}):
\begin{align}
    c_4v_2+G^T(Gv_2-Y)+G^T\lambda&=0, \label{eq:prob41}\\
    c_5v_2^*+c_6{G^*}^Te+{G^*}^T\lambda&=0,\label{eq:prob42}\\
    Gv_2-Y+\epsilon_2e+G^*v_2^*&=0, \label{eq:prob43}
\end{align}
$\lambda$ being the Lagrangian multiplier. Using the aforementioned equations, (\ref{eq:prob31})-(\ref{eq:prob33}), $\gamma$ is given by the equations (\ref{eq:determine_alpha}) and by utilizing equations (\ref{eq:prob41})-(\ref{eq:prob43}) $\lambda$ is given by (\ref{eq:determine_beta}) with $G=
\begin{bmatrix}
    K(M,M^T) & e
\end{bmatrix}$ and $G^*=\begin{bmatrix}
    K(M^*,{M^*}^T) & e
\end{bmatrix}$. Other notations have the same meaning as defined for the linear case. \par
Using $\gamma$ and $\lambda$, the parameters of the hyperplane $\tilde{r}_1(x)$ and $\tilde{r}_2(x)$ are obtained. 
The target value of the unknown samples is given by the following function:
\begin{align} \label{eq:dectsvrnl}
    r(x)=\frac{1}{2}(r_1(x)+r_2(x))=\frac{1}{2}(u_1+u_2)^TK(M,x)+\frac{1}{2}(b_1+b_2).
\end{align}
The optimization problems for the linear and non-linear cases of the proposed LSTSVR-PI lead to the solution of system of linear equations, which has comparatively improved efficiency as compared to solving QPPs.  
\begin{thm}
    For any given positive $c_2,~c_3$ the solution of the problem (\ref{eq:primalpropo1}) is always unique.
\end{thm}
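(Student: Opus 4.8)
The plan is to recognize (\ref{eq:primalpropo1}) as a linearly constrained convex program and to prove uniqueness by showing its objective is \emph{strictly} convex over the (convex) feasible set. Writing the decision variables compactly as $v_1=\begin{bmatrix} u_1\\ b_1\end{bmatrix}$ and $v_1^*=\begin{bmatrix} u_1^*\\ b_1^*\end{bmatrix}$, with $G=\begin{bmatrix} M & e\end{bmatrix}$ and $G^*=\begin{bmatrix} M^* & e\end{bmatrix}$, the objective takes the form
\[
f(v_1,v_1^*)=\frac{c_1}{2}\norm{v_1}^2+\frac{c_2}{2}\norm{v_1^*}^2+\frac{1}{2}\norm{Y-Gv_1}^2+c_3\,e^T G^* v_1^*,
\]
and the equality constraint, rewritten as (\ref{eq:pro3}), is affine in $(v_1,v_1^*)$. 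Since an affine set is convex, it suffices to establish strict convexity of $f$: a strictly convex function is minimized over a convex set at no more than one point.

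First I would compute the Hessian. Because $(u_1,b_1)$ and $(u_1^*,b_1^*)$ never appear together in any second-order term (the only term carrying the starred variables, $c_3 e^T G^* v_1^*$, is affine), the Hessian is block diagonal,
\[
\nabla^2 f=\begin{bmatrix} c_1 I + G^T G & 0\\ 0 & c_2 I\end{bmatrix}.
\]
Next I would argue positive definiteness blockwise. As $G^T G$ is symmetric positive semidefinite, for $c_1>0$ one has $c_1 I + G^T G\succeq c_1 I\succ 0$, and for $c_2>0$ the block $c_2 I$ is trivially positive definite. Crucially $c_3$ enters only through the affine term and thus never affects convexity, which is exactly why the conclusion holds for \emph{every} positive $c_2,c_3$ and independently of the rank of $M$ or $M^*$. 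Hence $\nabla^2 f\succ 0$ and $f$ is strictly convex.

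With strict convexity established, uniqueness is immediate: if $(v_1,v_1^*)$ and $(\bar v_1,\bar v_1^*)$ were two distinct minimizers, their midpoint would be feasible (convexity of the constraint set) yet attain a strictly smaller objective value, a contradiction. For existence I would invoke coercivity — the quadratic terms $\tfrac{c_1}{2}\norm{v_1}^2+\tfrac{c_2}{2}\norm{v_1^*}^2$ dominate the single affine term, so $f\to+\infty$ as $\norm{(v_1,v_1^*)}\to\infty$ — which forces $f$ to attain its infimum on the closed, nonempty feasible affine set. As a cross-check, the explicit closed forms (\ref{eq:determine_alpha}) for $\alpha$ together with $v_1=(G^TG+c_1 I)^{-1}G^T(Y+\alpha)$ recover exactly this single solution, which also confirms that the matrices inverted there are nonsingular.

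The main obstacle is not analytic difficulty but the care needed in the positive-definiteness verification: one must confirm that there is genuinely no cross-coupling between $v_1$ and $v_1^*$ at second order (so the Hessian is truly block diagonal), and that it is the regularization constants $c_1,c_2>0$ — not any rank assumption on the data — that render each diagonal block positive definite. A milder secondary point is justifying nonemptiness of the feasible set; this is absorbed into the coercivity argument, using that the affine constraint merely equates the residual $Y-Gv_1$ with the correcting term $-\epsilon_1 e - G^* v_1^*$ and is therefore solvable.
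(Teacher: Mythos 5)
Your proof is correct and is essentially the paper's argument in different clothing: the paper follows Burges' device of taking two putative solutions $w_1,w_2$, noting $F$ is constant on the segment $w_t=(1-t)w_1+tw_2$, and differentiating twice in $t$ — which is exactly evaluating your block-diagonal Hessian quadratic form $\Delta v_1^T(c_1I+G^TG)\Delta v_1+c_2\norm{\Delta v_1^*}^2$ in the direction $w_2-w_1$ and concluding it vanishes only when the solutions coincide. Both routes thus reduce to strict convexity of the objective (with $c_3$ entering only affinely) over the affine feasible set; your coercivity/existence remark is a harmless addition beyond the stated claim, which concerns uniqueness only.
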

\begin{proof}
    Following \cite{burges1999uniqueness}, we have $F$ as the objective function:
    \begin{align*}
      F=  \frac{1}{2}c_1 (\norm{u_1}^2+b_1^2)+ \frac{1}{2}c_2(\norm{u_1^*}^2+{b_1^*}^2) + \frac{1}{2} \norm{Y-(Mu_1+eb_1)}^2+c_3e^T(M^*u_1^*+eb_1^*),
      \end{align*}
      $\text{define }w=(u_1,b_1,u_1^*,b_1^*)^T.  \text{ Assume } w_1 \text{ and } w_2$ are two solutions problem (\ref{eq:primalpropo1}). Due to the convexity of the problem, a family of the solution is given by $w_t=(1-t)w_1+tw_2,~t \in [0,1]$, and $F(w_1)=F(w_2)=F(w_t)$.\\
      Thus, $F(w_t)-F(w_1)=0$. Expanding we get,
      \begin{align*}
       &\bigg[\frac{1}{2}c_1 (\norm{u_t}^2+b_t^2)+ \frac{1}{2}c_2(\norm{u_t^*}^2+{b_t^*}^2) + \frac{1}{2} \norm{Y-(Mu_t+eb_t)}^2+c_3e^T(M^*u_t^*+eb_t^*)\bigg] \\
       -&\bigg[\frac{1}{2}c_1 (\norm{u_1}^2+b_1^2)+ \frac{1}{2}c_2(\norm{u_1^*}^2+{b_1^*}^2) + \frac{1}{2} \norm{Y-(Mu_1+eb_1)}^2+c_3e^T(M^*u_1^*+eb_1^*)\bigg]=0.
      \end{align*}
    Differentiating the above equation with respect to $t$ twice, we get
    \begin{align*}
        c_1\norm{(u_1-u_2)}^2+c_1(b_1-b_2)^2+c_2\norm{u_1^*-u_2^*}+c_2(b_1^*-b_2^*)+\norm{(Mu_1-Mu_2)(b_1e-b_2e)}^2=0.
    \end{align*}
    Hence, from the above equation, we get $u_1=u_2$, $b_1=b_2$, $u_1^*=u_2^*$, $b_1^*=b_2^*$.
\end{proof}
The same follows for the remaining proposed optimization problems (\ref{eq:primalpropo2}), (\ref{eq:primal_nonlpropo1}) and (\ref{eq:primal_nonlpropo2}). 

\subsection{Theoretical properties}
In this section, we discuss the theoretical properties of the proposed LSTSVR-PI. We provide a generalization error bound for the proposed LSTSVR-PI. The bound on generalization error relies on the Rademacher complexity, which is defined as follows:

\begin{definition} \cite{bartlett2002rademacher} For a sample set $S=\{x_1,x_2,\ldots,x_m\}$ generated by distribution $\mathscr{D}$ on set $X$ and $\mathscr{F}$ represents a family of functions which maps from $X$ to output space. Then the Rademacher complexity of $\mathscr{F}$ is defined as:
\begin{align*}
    \mathscr{R}_m(\mathscr{F})= \mathbb{E} \left[ \frac{1}{m} \underset{f \in \mathscr{F}}{\text{sup}} \sum_{i=1}^{m} \sigma_i f(x_i)\right],
\end{align*}
where $\sigma_i$ corresponds to the independent random variable chosen from $\{-1,1\}$ with equal probability, and are known as Rademacher variables.  $\mathbb{E}$ corresponds to the expectation.
\end{definition}

\begin{thm}\label{thm:1} {Rademacher complexity} 
    Let $K: X \times X \rightarrow \mathbb{R}$ be a kernel function  and $S=\{x_1,x_2,\ldots, x_m\}$ be a sample set such that $\underset{x \in X}{\text{sup }}K(x,x) < \infty$ 
    Assume $\phi: X \rightarrow H$ be the feature map for the kernel function K. Let $\mathscr{F}$ be the class of functions in kernel space such that 
    \begin{align}
        \mathscr{F_B}=\{f|f: x \rightarrow \frac{1}{2}(u_1^Tx+u_2^Tx): \norm{u_1} \leq B, \norm{u_2} \leq B\}, B \in \mathbb{R}. 
    \end{align}
    Then, the Rademacher complexity is given by \begin{align}
        R_m(\mathscr{F_B})=\frac{B}{m}\sqrt{\sum_{i=1}^mK(x_i,x_i)}.
    \end{align}
\end{thm}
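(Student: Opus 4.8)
The plan is to evaluate the inner supremum of the Rademacher average in closed form and then reduce the whole quantity to a single expectation of a norm in the feature space $H$. First I would interpret each $f \in \mathscr{F_B}$ in feature space as $f(x) = \frac{1}{2}\langle u_1 + u_2, \phi(x)\rangle$, so that $\sum_{i=1}^m \sigma_i f(x_i) = \frac{1}{2}\langle u_1+u_2, v\rangle$ where $v := \sum_{i=1}^m \sigma_i \phi(x_i) \in H$. The inner problem is then to maximise $\frac{1}{2}\langle u_1+u_2, v\rangle$ subject to $\norm{u_1}\le B$ and $\norm{u_2}\le B$. By Cauchy--Schwarz, $\langle u_j, v\rangle \le B\norm{v}$, with equality attained at $u_1=u_2 = B\,v/\norm{v}$ (when $v\ne 0$); hence the supremum equals \emph{exactly} $B\norm{v}$. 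This yields the exact identity $\underset{f\in\mathscr{F_B}}{\sup}\sum_{i=1}^m \sigma_i f(x_i) = B\norm{\sum_{i=1}^m \sigma_i \phi(x_i)}$, and therefore $R_m(\mathscr{F_B}) = \frac{B}{m}\,\mathbb{E}\norm{\sum_{i=1}^m \sigma_i \phi(x_i)}$.

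Next I would compute the second moment of that norm, which is where the kernel enters. Expanding the squared norm and using $\langle \phi(x_i),\phi(x_j)\rangle = K(x_i,x_j)$ gives $\norm{\sum_i \sigma_i\phi(x_i)}^2 = \sum_{i,j}\sigma_i\sigma_j K(x_i,x_j)$. Taking the expectation over the independent Rademacher variables and using $\mathbb{E}[\sigma_i\sigma_j]=\delta_{ij}$ (because $\mathbb{E}\sigma_i=0$ and $\sigma_i^2=1$), every off-diagonal term cancels, leaving $\mathbb{E}\norm{\sum_i\sigma_i\phi(x_i)}^2 = \sum_{i=1}^m K(x_i,x_i)$; the hypothesis $\sup_x K(x,x)<\infty$ ensures this sum is finite and well defined.

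The remaining step is to pass from the expected norm $\mathbb{E}\norm{v}$ to $\sqrt{\mathbb{E}\norm{v}^2}=\sqrt{\sum_{i=1}^m K(x_i,x_i)}$, which is precisely the substitution that produces the claimed closed form $R_m(\mathscr{F_B})=\frac{B}{m}\sqrt{\sum_{i=1}^m K(x_i,x_i)}$. This is the step I expect to be the main obstacle: it rests on identifying $\mathbb{E}\norm{v}$ with $\sqrt{\mathbb{E}\norm{v}^2}$, and I would focus the argument here, since the Cauchy--Schwarz reduction and the moment computation are routine scaffolding by comparison. The delicate point is that in general only the one-sided Jensen relation $\mathbb{E}\norm{v}\le\sqrt{\mathbb{E}\norm{v}^2}$ is automatic, so establishing the asserted equality is exactly where the real justification must be supplied.
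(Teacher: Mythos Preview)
Your approach is essentially the same as the paper's: compute the inner supremum via Cauchy--Schwarz to reduce to $\frac{B}{m}\,\mathbb{E}\norm{\sum_i\sigma_i\phi(x_i)}$, expand the squared norm using the kernel, kill the cross terms with $\mathbb{E}[\sigma_i\sigma_j]=\delta_{ij}$, and then invoke Jensen for the concave square root to replace $\mathbb{E}\sqrt{\cdot}$ by $\sqrt{\mathbb{E}[\cdot]}$. The only cosmetic differences are that the paper splits the two sups (introducing an unnecessary $\leq$ where you, correctly, get an exact equality by choosing $u_1=u_2=Bv/\norm{v}$), and that the paper phrases the dual-norm step via $\norm{\cdot}_\infty$ and ``norm equivalence'' where you go directly to the Hilbert-space norm.

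On the point you flag as the obstacle: you are right, and the paper does not resolve it either. The paper's proof applies Jensen's inequality at exactly that step and simply writes ``$=$'' where only ``$\leq$'' is warranted. In general $\mathbb{E}\norm{v}<\sqrt{\mathbb{E}\norm{v}^2}$ strictly (already for $m=1$ in one dimension, $\mathbb{E}|\sigma_1\phi(x_1)|=\norm{\phi(x_1)}$ and the two sides happen to agree, but for $m\ge 2$ with nonorthogonal $\phi(x_i)$ they do not), so the asserted equality cannot be proved in general; what both you and the paper actually establish is the standard upper bound $R_m(\mathscr{F_B})\le \frac{B}{m}\sqrt{\sum_{i=1}^m K(x_i,x_i)}$. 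There is nothing further you are missing; treat the statement as a bound and your argument is complete.
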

\begin{proof}
    Using the definition of Rademacher complexity, we have \\
    \begin{align*}
        \mathscr{R}_m(\mathscr{F_B})&=\frac{1}{m}\mathbb{E}\left[ \underset{f \in \mathscr{F}_B}{\text{sup}} \sum_{i=1}^{m} \sigma _if(x_i) \right]\\
        &=\frac{1}{m} \mathbb{E} \left[\underset{\norm{u_1} \leq B, \norm{u_2} \leq B }{\text{sup}} \sum_{i=1}^{m} \frac{\sigma_i}{2}(u_1^T\phi(x_i)+u_2^T \phi(x_i)) \right]\\
        & \leq \frac{1}{m} \mathbb{E}\left[\underset{\norm{u_1} \leq B}{\text{sup}} \sum_{i=1}^{m} \frac{\sigma_i}{2}u_1^T\phi(x_i)        +\underset{ \norm{u_2} \leq B }{\text{sup}}\sum_{i=1}^{m} \frac{\sigma_i}{2}u_2^T\phi(x_i)\right]   \\
         & = \frac{B}{2m} \mathbb{E}\left[ \norm{\sum_{i=1}^{m} \sigma_i \phi(x_i)}_{\infty}        +\norm{\sum_{i=1}^{m} \sigma_i \phi(x_i)}_{\infty} \right]~~~~~~~~\text{(by definition of dual norm)}\\
         &=\frac{B}{m} \mathbb{E}\left[ \norm{\sum_{i=1}^{m} \sigma_i \phi(x_i)} \right]~~~~~~~~~~~~~~~~~~~~~~~~~~~~~~~~~~\text{(using the norm equivalence)}\\
         &=\frac{B}{m} \mathbb{E} \left[ \sqrt{\langle \sum_{i=1}^{m}\sigma_i \phi(x_i), \sum_{i=1}^{m}\sigma_i \phi(x_i) \rangle}\right] ~~~~~~~~~~~~\text{(using Cauchy Schwarz inequality)}\\
         &=\frac{B}{m} \mathbb{E}\left[ \sqrt{\sum_{1 \leq i,j \leq m} \sigma_i, \sigma_j \langle \phi(x_i), \phi(x_j)\rangle} \right]\\
         & = \frac{B}{m} \left[ \sqrt{\mathbb{E} \left[\sum_{1 \leq i\leq m} \langle \phi(x_i), \phi(x_j)\rangle\right]} \right]~~~~~~~~~~~~~~~~~~~~~~~\text{(using Jensen's inequality)}\\
         & = \frac{B}{m} \left[ \sqrt{\sum_{1 \leq i\leq m} K(x_i,x_i)} \right]
    \end{align*}   
\end{proof}
\begin{lemma}{\label{lemma:1}}
    The loss function $l$ in LSTSVR-PI exhibits Lipschitz continuity, i.e., there exists a positive Lipschitz constant $L$, such that, $\forall x, y \in \mathbb{R}^n$
    \begin{align}
        \norm{l(x)-l(y)} \leq L \norm{x-y}.
    \end{align}
\end{lemma}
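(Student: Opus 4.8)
The plan is to begin by making the loss function $l$ explicit. Reading off the data-fitting part of the primal objectives (\ref{eq:primalpropo1}) and (\ref{eq:primalpropo2}), the per-sample loss of the proposed model is the least-squares residual evaluated at the averaged regressor $\hat{r}(x)=\tfrac{1}{2}(r_1(x)+r_2(x))$, so that $l$ has the quadratic form $l(t)=(y-t)^2$ in the prediction variable $t=\hat r(x)$, together with an additive correcting-function penalty that is affine in $x^{*}$ and hence contributes at most a constant to any Lipschitz constant. I would therefore first reduce the statement to proving that the quadratic part of $l$ is Lipschitz on the relevant domain, isolating the correction term to be handled separately at the end.

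Next, since the quadratic loss is continuously differentiable, I would invoke the mean value inequality: for a differentiable $l$ one has $\norm{l(x)-l(y)} \le \big(\sup_{\xi}\norm{\nabla l(\xi)}\big)\,\norm{x-y}$, so it suffices to exhibit a finite uniform bound $L:=\sup_{\xi}\norm{\nabla l(\xi)}$. Computing the derivative gives $\nabla l = -2(y-t)\,\nabla t$, whence $L$ is governed by a uniform bound on the residual $y-t$ and on the factor $\nabla t$ (which, via the chain rule, absorbs the kernel/feature map when $l$ is viewed as a function of the input).

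The decisive step, and the main obstacle, is producing this uniform bound, because the square function is not globally Lipschitz on all of $\mathbb{R}^n$ and some boundedness hypothesis must be brought in. Here I would reuse the standing assumption from Theorem \ref{thm:1}, namely $\sup_{x\in X}K(x,x)<\infty$, which bounds $\norm{\phi(x)}$ and hence the predictions $\hat r(x)$; combined with the weight bound $\norm{u_i}\le B$ appearing in $\mathscr{F_B}$ and the boundedness of the targets $y$, both the residual $y-t$ and the factor $\nabla t$ are uniformly bounded, yielding a finite $L$. Once the kernel-boundedness hypothesis is translated into this uniform bound on the residual, the claimed inequality $\norm{l(x)-l(y)}\le L\norm{x-y}$ follows at once from the mean value theorem, with the affine correction term merely enlarging $L$ by an additive constant. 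This Lipschitz property is exactly what is needed afterwards to apply a Talagrand-type contraction argument and pass from the Rademacher complexity of $\mathscr{F_B}$ computed in Theorem \ref{thm:1} to a generalization error bound for the loss class.
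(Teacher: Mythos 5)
Your proposal and the paper's proof diverge at the very first step: the identification of the loss $l$. The paper does not treat $l$ as the squared residual. Its proof consists of declaring the two losses to be the affine shifts $l_1(x)=x-\epsilon_1 e$ and $l_2(x)=x-\epsilon_2 e$ associated with the two $\epsilon$-insensitive bound regressors, observing that affine maps are (globally, trivially) Lipschitz, and concluding in one line. Under that reading the universal quantifier in the statement ($\forall x,y\in\mathbb{R}^n$) is unproblematic. Under your quadratic reading $l(t)=(y-t)^2$, the lemma \emph{as stated} is false --- as you yourself concede when you note that the square is not globally Lipschitz --- and your argument can only deliver a domain-restricted variant. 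So the missing idea, relative to this paper, is the (admittedly anticlimactic) definition of $l$ as an affine $\epsilon$-shift; without it you are proving a different statement than the one in the lemma.

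Even granting the bounded-domain repair, two of your steps rest on hypotheses the paper never supplies. First, you invoke boundedness of the targets $y$, which appears nowhere in the paper's assumptions. Second, your mean-value bound requires a uniform bound on $\nabla_x \hat r(x)$, i.e., on the derivative of the kernel sections; but the standing assumption $\sup_{x\in X}K(x,x)<\infty$ only bounds $\norm{\phi(x)}$, not its derivative --- a bounded feature map can oscillate arbitrarily fast, so the claim that the chain rule ``absorbs the kernel/feature map'' fails for a general kernel $K$. (If instead you work with $l$ as a function of the prediction $t$, which is all the downstream Talagrand-type contraction in Theorem \ref{thm:2} actually needs, then $L=2\sup\abs{y-t}$ on the bounded prediction range is the standard and defensible bound --- but again it is Lipschitzness on a bounded interval, not on $\mathbb{R}^n$.) A small further slip: an affine correcting term contributes its \emph{slope} to the Lipschitz constant, not ``an additive constant''; a constant shift of $l$ leaves the Lipschitz constant unchanged. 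In fairness, your reconstruction engages with the analytic content of a least-squares loss far more seriously than the paper's one-line proof does, but it neither matches the paper's route nor, as written, proves the lemma that was stated.
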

\begin {proof}
For the two hyperplanes, we have defined the loss function in two different ways: $l_1(x)=x-\epsilon_1 e$ and  $l_2(x)=x-\epsilon_2 e$. It is straightforward that both $l_1$ and $l_2$ are Lipschitz continuous with Lipschitz constant $L_1$ and $L_2$. respectively. Thus, the loss function $l$ of LSTSVR-PI is Lipschitz continuous.   
\end{proof}
The general formula for the generalization error bound based on the Rademacher complexity can be expressed using the following theorem.
\begin{thm}{\label{thm:2}}
\cite{bartlett2002rademacher} Consider $\mathscr{G}(f)$ and $\mathscr{E}(f)$ as the generalization error bound and the empirical error bound, respectively, and $\mathscr{F}$ as a family of functions, with a probability of at least $1-\delta ~(\delta \in (0,1))$ over the $m $ samples and for all $f \in \mathscr{F}$, the Lipschitz continuous loss function 
$l$ with Lipschitz constant $L$, the following relation holds.
\begin{align}
    \mathscr{G}(f ) \leq \mathscr{E}(f ) + 2L \mathscr{R}_m (\mathscr{F}) + \sqrt{\frac{\text{ln}(1/\delta)}{2m}}
\end{align}
\end{thm}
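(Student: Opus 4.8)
The plan is to follow the classical three-step template for Rademacher-based generalization bounds, treating $\mathscr{G}(f)=\mathbb{E}[l(f(x))]$ as the expected risk and $\mathscr{E}(f)=\frac{1}{m}\sum_{i=1}^{m}l(f(x_i))$ as the empirical risk, and to bound the single uniform quantity $\Phi(S)=\sup_{f\in\mathscr{F}}(\mathscr{G}(f)-\mathscr{E}(f))$ over the whole function class. Once $\Phi(S)$ is controlled with high probability, the stated inequality follows for every $f\in\mathscr{F}$ because $\mathscr{G}(f)-\mathscr{E}(f)\leq\Phi(S)$ pointwise.

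First I would establish concentration of $\Phi$ around its mean. Viewing $\Phi$ as a function of the $m$ independent samples and assuming the loss is bounded (normalised to $[0,1]$), replacing a single sample perturbs $\Phi$ by at most a term of order $1/m$, since only one summand in the empirical average changes. McDiarmid's bounded-differences inequality then gives, with probability at least $1-\delta$, the estimate $\Phi(S)\leq\mathbb{E}[\Phi(S)]+\sqrt{\frac{\ln(1/\delta)}{2m}}$, which already produces the last term of the claimed bound.

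Second, I would bound the expectation $\mathbb{E}[\Phi(S)]$ by a symmetrization argument. Introducing an independent ghost sample together with Rademacher variables $\sigma_i$ yields $\mathbb{E}[\Phi(S)]\leq 2\,\mathbb{E}_{S,\sigma}\big[\sup_{f\in\mathscr{F}}\frac{1}{m}\sum_{i=1}^{m}\sigma_i\,l(f(x_i))\big]=2\,\mathscr{R}_m(l\circ\mathscr{F})$, i.e.\ twice the Rademacher complexity of the composed loss class. Finally, since Lemma~\ref{lemma:1} guarantees that $l$ is $L$-Lipschitz, I would apply the Ledoux--Talagrand contraction principle to peel off the loss, obtaining $\mathscr{R}_m(l\circ\mathscr{F})\leq L\,\mathscr{R}_m(\mathscr{F})$. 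Chaining the three steps delivers $\Phi(S)\leq 2L\,\mathscr{R}_m(\mathscr{F})+\sqrt{\frac{\ln(1/\delta)}{2m}}$, which is exactly the desired bound.

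The main obstacle is the contraction step: one must compose the Lipschitz loss with the hypothesis class so that the Lipschitz constant enters as a single clean multiplicative factor $L$ (becoming $2L$ after the symmetrization constant), rather than as an inflated or sample-dependent quantity. A secondary technical point is ensuring the loss range is normalised so that McDiarmid's inequality produces precisely the $\sqrt{\ln(1/\delta)/(2m)}$ scaling; with an unnormalised loss an extra range-dependent factor would appear. Since the result is a restatement of the bound in \cite{bartlett2002rademacher}, I would in practice cite that reference for the concentration and symmetrization machinery and present the contraction specialisation to the LSTSVR-PI loss as the only genuinely model-specific ingredient.
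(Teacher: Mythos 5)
Your proposal is correct, but there is nothing in the paper to compare it against: the paper states this theorem as an imported result, citing Bartlett and Mendelson \cite{bartlett2002rademacher}, and gives no proof of it whatsoever. Your three-step argument --- McDiarmid's bounded-differences inequality applied to $\Phi(S)=\sup_{f\in\mathscr{F}}(\mathscr{G}(f)-\mathscr{E}(f))$ to obtain the $\sqrt{\ln(1/\delta)/(2m)}$ term, symmetrization via a ghost sample to get $\mathbb{E}[\Phi(S)]\leq 2\,\mathscr{R}_m(l\circ\mathscr{F})$, and Ledoux--Talagrand contraction to peel off the $L$-Lipschitz loss --- is exactly the standard proof underlying the cited result, so you have supplied the argument the paper elides. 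One caveat you raise deserves emphasis: the McDiarmid step genuinely requires the loss to be bounded (normalised to $[0,1]$) to yield the clean $\sqrt{\ln(1/\delta)/(2m)}$ constant, and the paper's own loss functions in Lemma~\ref{lemma:1}, $l_1(x)=x-\epsilon_1 e$ and $l_2(x)=x-\epsilon_2 e$, are affine and hence unbounded on $\mathbb{R}^n$, so the theorem as invoked in the paper implicitly assumes a boundedness (or range) restriction that is never stated; your normalisation remark is therefore not a secondary technicality but a hypothesis the paper's application silently needs. A second minor point: contraction principles appear in the literature in two forms, with and without absolute values inside the supremum, differing by a factor of $2$; the form you need (no absolute values, factor exactly $L$) is the right one for the Rademacher complexity as defined in the paper, and you should cite it in that form to keep the final constant $2L$ rather than $4L$.
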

Using the above theorems, the generalization error of LSTSVR-PI is bounded which is given by the following theorem. 
\begin{thm}
Let $\mathscr{\tilde{H}}$ be a family of functions such that $\tilde{\mathscr{H}}=\{\tilde{h}|\tilde{h}:x \rightarrow f(x)-g(x)+(\epsilon_2 e -\epsilon_1 e) \}$, where $\epsilon_1$, $\epsilon_2 \in \mathbb{R}$, $f \in \mathscr{F}_B$ and $g(x)$ corresponds to the true label of $x$.  For Lipschitz continuous loss functions $l$ of proposed LSTSVR-PI with Lipschitz constant $L$, with probability at least $1-\delta$ $(\delta \in (0, 1))$ over the $m$ samples. The generalization error for all $\tilde{h} \in \mathscr{\tilde{H}}$ is bounded by 
\begin{align}
    \mathscr{G}(\tilde{h}) \leq \mathscr{E}(\tilde{h}) +\frac{2LB}{m} \sqrt{\sum_{i=1}^{m}K(x_i,x_i)} + \sqrt{\frac{\text{ln}(1/\delta)}{2m}},
\end{align} where the definition of kernel function follows from Theorem 2. 
\end{thm}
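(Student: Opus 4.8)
The plan is to apply the general Rademacher-based bound of Theorem \ref{thm:2} directly to the family $\tilde{\mathscr{H}}$, and then to reduce the Rademacher complexity of $\tilde{\mathscr{H}}$ to that of $\mathscr{F}_B$, which is already evaluated in Theorem \ref{thm:1}. First I would invoke Lemma \ref{lemma:1} to certify that the loss $l$ is Lipschitz with constant $L$, and then instantiate Theorem \ref{thm:2} with $\mathscr{F}=\tilde{\mathscr{H}}$. This immediately gives
\[
\mathscr{G}(\tilde{h}) \leq \mathscr{E}(\tilde{h}) + 2L\,\mathscr{R}_m(\tilde{\mathscr{H}}) + \sqrt{\frac{\ln(1/\delta)}{2m}},
\]
so the entire task reduces to computing $\mathscr{R}_m(\tilde{\mathscr{H}})$.

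The central observation is that every $\tilde{h}\in\tilde{\mathscr{H}}$ differs from its underlying $f\in\mathscr{F}_B$ only through the additive term $-g(x)+(\epsilon_2-\epsilon_1)e$, which is fixed and does not depend on the parameters $u_1,u_2$ over which the supremum is taken. Expanding the definition of Rademacher complexity I would write
\[
\sum_{i=1}^{m}\sigma_i\tilde{h}(x_i)=\sum_{i=1}^{m}\sigma_i f(x_i)-\sum_{i=1}^{m}\sigma_i g(x_i)+(\epsilon_2-\epsilon_1)\sum_{i=1}^{m}\sigma_i,
\]
and note that the supremum over $\tilde{h}$ acts only on the first sum, since $g$ and $\epsilon_1,\epsilon_2$ are fixed. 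Taking the expectation over the Rademacher variables and using $\mathbb{E}[\sigma_i]=0$, the second and third sums vanish, which yields $\mathscr{R}_m(\tilde{\mathscr{H}})=\mathscr{R}_m(\mathscr{F}_B)$.

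Finally I would substitute the closed form $\mathscr{R}_m(\mathscr{F}_B)=\frac{B}{m}\sqrt{\sum_{i=1}^{m}K(x_i,x_i)}$ from Theorem \ref{thm:1} into the bound above, which produces exactly the claimed inequality. The main obstacle, although conceptually mild, is the translation-invariance step: one must argue carefully that shifting $f$ by the fixed offset leaves the Rademacher complexity unchanged. This relies on two facts working together, namely the zero-mean property of the Rademacher variables, which kills the contributions of $g$ and of the constant $(\epsilon_2-\epsilon_1)e$, and the fact that the offset sits outside the supremum because it is independent of $u_1,u_2$. Once this invariance is justified, the remainder is a direct substitution.
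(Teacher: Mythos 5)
Your proposal is correct and follows essentially the same route as the paper's proof: you show $\mathscr{R}_m(\tilde{\mathscr{H}})=\mathscr{R}_m(\mathscr{F}_B)$ by pulling the fixed offset $-g(x)+(\epsilon_2-\epsilon_1)e$ outside the supremum and annihilating it with $\mathbb{E}[\sigma_i]=0$, then apply the generic Rademacher bound and substitute the kernel expression for $\mathscr{R}_m(\mathscr{F}_B)$. If anything, your write-up makes the translation-invariance step more explicit than the paper does, but the decomposition and key facts are identical.
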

\begin{proof} By definition of Rademacher complexity, we have
\begin{align*}
    \mathscr{R}_m(\tilde{\mathscr{H}})&=\frac{1}{m} \mathbb{E} \left[ \underset{\tilde{h} \in \tilde{\mathscr{H}}}{\text{sup}}\sum_{i=1}^{m} \sigma_i \tilde{h}_i\right]\\
    &=\frac{1}{m} \mathbb{E} \left[ \underset{\tilde{h} \in \tilde{\mathscr{H}}}{\text{sup}}\sum_{i=1}^{m} (\sigma_i f(x_i)- \sigma_i g(x_i)+\sigma_i (\epsilon_2 e -\epsilon_1 e)) \right]\\
    &=\frac{1}{m} \mathbb{E} \left[ \underset{f \in {\mathscr{F}_B}}{\text{sup}}\sum_{i=1}^{m} (\sigma_i f(x_i)- \sigma_i g(x_i)+ \sigma_i (\epsilon_2 e -\epsilon_1 e)) \right]\\
    &=\frac{1}{m} \mathbb{E} \left[ \underset{f \in {\mathscr{F}_B}}{\text{sup}}\sum_{i=1}^{m} \sigma_i f(x_i)\right]  -E \left[\sum_{i=1}^{m} \sigma_i (g(x_i)+(\epsilon_2 e -\epsilon_1 e)) \right]\\
    &=\frac{1}{m} \mathbb{E} \left[ \underset{f \in {\mathscr{F}}_B}{\text{sup}}\sum_{i=1}^{m} \sigma_i f(x_i)\right]\\
    &=\mathscr{R}_m (\mathscr{F}_B)
\end{align*}
    From Theorem 3, the relation between the generalization error bound and the empirical error found is obtained. Besides using theorem 2, substitute the upper bound of Theorem Rademacher complexity. Then the desired result is obtained.
\end{proof}

\section{Numerical Experiments}{\label{sec:exp}}
In this section, we focus our attention on the numerical experiments performed to compare the performance of the proposed LSTSVR-PI with the baseline models such as TSVR \cite{peng2010tsvr}, TWSVR \cite{khemchandani2016twsvr}, LSTSVR \cite{huang2016sparse}, TSVQR \cite{ye2023twin}, and SVR+ \cite{vapnik2009new}. \\
\textbf{Experimental Setup:} All the experiments are carried out in MATLAB 2022b on a PC with 11th Gen Intel(R) Core(TM) i7-11700 @ 2.50GHz processor and 16 GB RAM. We carried out the selection of parameters using $5$ fold cross-validation with grid search. To reduce the computational cost of tuning of parameters, we considered $c_1=c_4$, $c_2=c_5$ and $c_3=c_6$ for the proposed LSTSVR-PI.  All the datasets are normalized using min-max normalization which is given as:
\begin{align}{\label{eq:exp}}
    x_{ij \text{ normalized}}=\frac{x_{ij}-x_{j\text{ min}}}{x_{j \text{ max}}-x_{j\text{ min}}},
\end{align}
where $x_{ij}$ denote $(i,j)^{th}$ element of training data matrix M; $x_{j \text{ max}}$ and $x_{j\text{ min}}$ signify the maximum and minimum element of $j^{th}$ column of matrix M. 
For the non-linear variant of our proposed model, we incorporated the Gaussian (RBF) kernel function,  ($\text{exp}(\norm{x_i-x_j}^2)/2 \mu^2$, $\mu$ denotes the kernel parameter). 
We evaluate and compare the performance of the proposed LSTSVR-PI with several baseline models including TSVR, TWSVR, LSTSVR, TSVQR, and SVR+ to assess its effectiveness and superiority across various regression datasets. To obtain privileged information for the datasets, we divided the number of features (\# features) in the dataset into two parts. We designated $\lceil\frac{\text{\# features}}{2}\rceil$ features as the normal features, while the remaining are treated as the privileged features for all the datasets.  
We carried out the experiments on artificially generated datasets and real-world UCI \cite{dua2017uci},  and stock datasets. 
The metrics considered to compare the performance of the models are 
\begin{align*}
 \text{RMSE}=&\sqrt{\sum_{j}\frac{(y_j-\tilde{y_j}^2)}{m}},\\
 \text{SSE}=&\sum_j{(y_j-{\tilde{y}_j})}^2,\\
   \frac{\text{SSE}}{\text{SST}}=&\frac{\sum_j{(y_j-{\tilde{y}_j})^2}}{\sum_j{(y_j-\bar{y})^2}}
\end{align*}
and time required (in seconds) for training, where $y_j$ is the actual target and $\tilde{y}_j$ is the predicted target. $\bar{y}=\frac{1}{m}\sum_j y_j$ corresponds to the mean of actual target values. SSE, and SST denote the sum of squared error and sum of square total, respectively; RMSE denotes the root mean squared error.

\subsection{Parameter selection}
The different parameters involve in the proposed LSTSVR-PI are regularization hyperparameters $c_1=c_4$, $c_2=c_5$, $c_3=c_6$, $\epsilon$ and the kernel parameter $\sigma$.  Following \cite{shao2013varepsilon}, all the parameters (except $\epsilon$) are tuned in the range $\{2^i|i=-8,-7,\ldots,7,8\}$ for the synthetic, UCI and stock datasets. The respective regularization parameters $c_1=c_3$, $c_2=c_4$,  and $\sigma$ for the baseline models, i.e.,  TSVR, TWSVR, LSTSVR, TSVQR and SVR+ are selected from the range $\{2^i|i=-8,-7,\ldots,7,8\}$. The quantile parameter $\tau$ of TSVQR is chosen from the range $[0.1,0.25,0.5,0.75,0.9]$ following the article \cite{ye2023twin}. The value of $\epsilon_1$ and $\epsilon_2$ are fixed at $0.01$ \cite{tanveer2017regularization}. 

\subsection{Artificially generated datasets}
To facilitate a comprehensive comparison between the baseline models and the proposed LSTSVR-PI, we conducted experiments using four artificially generated datasets. The functions employed for these datasets are listed in Table \ref{tab:different function}. In our experimental setup, we generated $100$ random training samples and $200$ testing samples for each type of function. To facilitate a robust comparison, the training samples are intentionally contaminated with three distinct types of noise levels \cite{tanveer2016efficient}, as follows: 
\begin{enumerate}
    \item Uniform noise over the interval $[-0.2,0.2]$;
    \item Gaussian noise with mean $0$ and standard deviation $0.05$;
    \item Gaussian noise with mean $0$ and standard deviation $0.2$;
\end{enumerate}
This approach allows us to assess the performance of the models under different noise conditions for a comprehensive evaluation.


\begin{table}[h!]
\begin{center}
\caption{Different functions to generate artificial datasets.}
\label{tab:different function}
\resizebox{15cm}{!}{
\begin{tabular}{ l c c } 
 \hline
 Name & Function & Domain of the function \\
\hline
Function 1 & $f(x_1,x_2)=\Large{\frac{\text{sin}(\sqrt{x_1^2+x_2^2})}{\sqrt{x_1^2+x_2^2}}}$  & $x\in [-4 \pi, 4 \pi]$ \\
 \hline
 Function 2 & $f(x_1,x_2,x_3,x_4,x_5)=10 \text{ sin}(\pi x_1x_2)+20(x_3-0.5)^2+10x_4+5x_5$ & $x_1,x_2,x_3,x_4,x_5 \in [0,1]$ \\
 \hline
 Function 3 &$f(x_1,x_2)=\text{exp}(x_1 \text{sin}(\pi x_2))$ & $x_1,x_2 \in [-1,1]$ \\
 \hline
 Function 4 & $f(x_1,x_2)=1.9[1.35+\text{exp}(x_1) \text{sin}(13(x_1-0.6)^2)+\text{exp}(3(x_2-0.5)) \text{sin}(4 \pi(x_2-0.9)^2)]$  & $x_1,x_2 \in [0,1]$\\
 \hline
\end{tabular}}
\end{center}
\end{table}

The training samples are contaminated with the different generated noise and the test samples are noise-free.  Four independent experiments are performed on each dataset with different types of noise and the average values are reported in Table \ref{tab:synthetic dataset results}. The least RMSE, SSE, SSE/SST value correspond to the best performance. 
The average RMSE values for artificially generated datasets are $0.1433$, $0.1406$, $0.1433$, $0.1405$, $0.1499$, and $0.135$ for TSVR, TWSVR, LSTSVR, TSVQR, SVR+ and proposed LSTSVR-PI, respectively.  It's important to note that the average RMSE value may not always provide an optimal assessment because a higher RMSE value for one dataset may be offset by a lower value for another. To address this, each dataset is ranked according to the performance across different models and metrics such that the best model gets the least rank and vice versa. The average ranks corresponding to RMSE values in the aforementioned order of models are $4.83$, $2.58$, $4.75$, $2.67$, $5$, and $1.17$, respectively. It reflects that the proposed LSTSVR-PI performs well on the artificially generated datasets. Associated with the optimal parameters of RMSE across all the datasets, we calculated the SSE and SSE/SST metrics. The average SSE and SSE/SST of the proposed LSTSVR-PI are $0.4$ and $0.6833$, respectively, which are the least values among all the models. The average rank of the proposed LSTSVR-PI corresponding to SSE is 1.08, which is the best.

\begin{table*}[h!]
 \centering
 \caption{\vspace{0.1mm}RMSE values along with SSE, SSE/SST and training time across the artificially generated synthetic datasets with different types of noise for TSVR, TWSVR, LSTSVR, TSVQR, SVR+ and the proposed LSTSVR-PI.} 
 \resizebox{13cm}{!}{
\label{tab:synthetic dataset results}
\begin{tabular}{|lccccccc|}
\hline
Dataset &Metric &TSVR \cite{peng2010tsvr} &	TWSVR \cite{khemchandani2016twsvr} &	LSTSVR \cite{huang2016sparse} & TSVQR \cite{ye2023twin} & SVR+ \cite{vapnik2009new} & Proposed	LSTSVR-PI \\
\hline
\multicolumn{8}{|c|}{Type 1: Uniform noise over the interval $[-0.2,0.2]$.}\\
\hline
function 1 &RMSE&$0.1341$&$0.1329$&$0.1347$&$0.131$&$0.144$&$0.1264$\\
&SSE&$0.38$&$0.3737$&$0.3825$&$0.3626$&$0.4452$&$0.3305$\\
&SSE/SST&$0.9925$&$0.9914$&$0.9927$&$0.9891$&$0.9887$&$0.9998$\\
&Time&$0.0119$&$0.007$&$0.0048$&$0.0088$&$0.0468$&$0.0082$\\
function 2 &RMSE&$0.1431$&$0.1397$&$0.1419$&$0.1428$&$0.143$&$0.1298$\\
&SSE&$0.4147$&$0.3974$&$0.4106$&$0.4154$&$0.4178$&$0.3405$\\
&SSE/SST&$0.5898$&$0.5685$&$0.5855$&$0.5926$&$0.6033$&$0.4944$\\
&Time&$0.0105$&$0.008$&$0.0034$&$0.0182$&$0.0352$&$0.0044$\\
function 3 &RMSE&$0.1974$&$0.1951$&$0.199$&$0.1908$&$0.1947$&$0.1928$\\
&SSE&$0.8003$&$0.7876$&$0.8117$&$0.7573$&$0.7805$&$0.7644$\\
&SSE/SST&$0.9727$&$0.9587$&$0.9897$&$0.9463$&$0.9818$&$0.9845$\\
&Time&$0.0133$&$0.0043$&$0.002$&$0.0023$&$0.0621$&$0.0042$\\
function 4 &RMSE&$0.1045$&$0.1025$&$0.1053$&$0.104$&$0.1301$&$0.0979$\\
&SSE&$0.2212$&$0.2124$&$0.2247$&$0.2196$&$0.3454$&$0.1945$\\
&SSE/SST&$0.2827$&$0.2709$&$0.2871$&$0.2809$&$0.4323$&$0.2505$\\
&Time&$0.0155$&$0.0113$&$0.0048$&$0.0127$&$0.0385$&$0.0037$\\
\hline
\hline
\multicolumn{8}{|c|}{Type 2: Gaussian noise with mean $0$ and standard deviation $0.05$.}\\
\hline
function 1 &RMSE&$0.1047$&$0.1023$&$0.1052$&$0.1022$&$0.1034$&$0.0963$\\
&SSE&$0.2457$&$0.2296$&$0.2478$&$0.228$&$0.2377$&$0.2025$\\
&SSE/SST&$0.9826$&$0.9269$&$0.9892$&$0.9326$&$0.9637$&$0.9889$\\
&Time&$0.0148$&$0.0036$&$0.0042$&$0.0081$&$0.0427$&$0.005$\\
function 2 &RMSE&$0.1433$&$0.1401$&$0.1421$&$0.1424$&$0.1439$&$0.1301$\\
&SSE&$0.4157$&$0.399$&$0.4111$&$0.4131$&$0.4226$&$0.3418$\\
&SSE/SST&$0.5863$&$0.5655$&$0.5813$&$0.5869$&$0.6058$&$0.4934$\\
&Time&$0.011$&$0.0077$&$0.0022$&$0.0153$&$0.0369$&$0.0045$\\
function 3 &RMSE&$0.1997$&$0.197$&$0.2009$&$0.1934$&$0.1986$&$0.1918$\\
&SSE&$0.8224$&$0.8062$&$0.8292$&$0.7823$&$0.8141$&$0.7608$\\
&SSE/SST&$0.9848$&$0.9726$&$0.9948$&$0.9688$&$0.9821$&$0.9754$\\
&Time&$0.0125$&$0.0034$&$0.0015$&$0.014$&$0.072$&$0.0042$\\
function 4 &RMSE&$0.1042$&$0.1008$&$0.1043$&$0.1027$&$0.1301$&$0.0973$\\
&SSE&$0.2193$&$0.2055$&$0.2202$&$0.2152$&$0.345$&$0.1921$\\
&SSE/SST&$0.2843$&$0.2671$&$0.2858$&$0.2798$&$0.4313$&$0.2509$\\
&Time&$0.0173$&$0.0065$&$0.0017$&$0.0097$&$0.0397$&$0.004$\\
\hline
\hline
\multicolumn{8}{|c|}{Type 3: Gaussian noise with mean $0$ and standard deviation $0.2$.}\\
\hline
function 1 &RMSE&$0.1452$&$0.1432$&$0.1445$&$0.1422$&$0.1443$&$0.1411$\\
&SSE&$0.4363$&$0.4243$&$0.4332$&$0.4199$&$0.4332$&$0.4125$\\
&SSE/SST&$1.01$&$0.9811$&$1.0024$&$0.9895$&$0.9902$&$1.0102$\\
&Time&$0.0112$&$0.005$&$0.0058$&$0.0078$&$0.0723$&$0.0045$\\
function 2 &RMSE&$0.1425$&$0.1397$&$0.1414$&$0.1415$&$0.1432$&$0.1299$\\
&SSE&$0.4108$&$0.3969$&$0.4072$&$0.4084$&$0.4191$&$0.341$\\
&SSE/SST&$0.5828$&$0.5654$&$0.5794$&$0.5854$&$0.6031$&$0.4948$\\
&Time&$0.011$&$0.005$&$0.0045$&$0.0126$&$0.032$&$0.0041$\\
function 3 &RMSE&$0.1957$&$0.1914$&$0.1948$&$0.1875$&$0.1921$&$0.1878$\\
&SSE&$0.7864$&$0.7549$&$0.7762$&$0.7207$&$0.7593$&$0.7185$\\
&SSE/SST&$1.0072$&$0.9675$&$0.9981$&$0.9609$&$0.9904$&$0.9901$\\
&Time&$0.0119$&$0.0049$&$0.0039$&$0.0055$&$0.0708$&$0.0034$\\
function 4 &RMSE&$0.105$&$0.1019$&$0.1053$&$0.1051$&$0.1319$&$0.0993$\\
&SSE&$0.2233$&$0.2104$&$0.2253$&$0.2247$&$0.3544$&$0.2004$\\
&SSE/SST&$0.2949$&$0.2777$&$0.2978$&$0.2958$&$0.4616$&$0.2662$\\
&Time&$0.0152$&$0.0056$&$0.0075$&$0.0117$&$0.0393$&$0.0043$\\
\hline
\hline
Average  &RMSE& 0.1433&	0.1406&	0.1433&	0.1405&	0.1499	&0.135\\
  &SSE& 0.448&	0.4332&	0.4483&	0.4306&	0.4812&	0.4\\
  &SSE/SST& 0.7142 &	0.6928&	0.7153&	0.7007&	0.7529	&0.6833\\
  \hline
Average rank & RMSE & 4.83 &	2.58&	4.75&	2.67&	5&	1.17\\
 & SSE & 4.67&	2.58&	4.71&	2.83&	5.13&	1.08\\
\hline
\end{tabular}}
\end{table*}

\subsection{Real-world datasets}
In this subsection, we present the results of numerical experiments involving the proposed LSTSVR-PI model and baseline models across 21 real-world UCI, KEEL and stock datasets. 
Following UCI and KEEL datasets are considered: Auto-price, Daily\_Demand\_Forecasting\_Orders, auto-original, bodyfat,  hungary chickenpox, Istanbul stock exchange data, qsar aquatic toxicity, servo, slump\_test, wpbc, yacht hydrodynamics, autoMPG, baseball, ele-1, dee, machineCPU which are split in a $70:30$ ratio for training and testing, respectively. 

The stock datasets are IBM, NVDA, citigroup,  Infosys, microsoft and wipro, which are publicly available at  \href{https://finance.yahoo.com/}{`Link'}. The aforementioned stock index financial datasets are derived from the low prices of stocks and are designed to predict the current value using the five previous values. These datasets span from February 2, 2018, to December 30, 2020, encompassing a total of 738 or 755 samples. In our experimental setup, the first 200 samples are designated as the training data, while the remaining are reserved for testing the performance of the model. 

Table \ref{tab:UCI_Stock_KEEL dataset results} depicts the performance of the proposed LSTSVR-PI and existing models TSVR, TWSVR, LSTSVR, TSVQR, and SVR+ in terms of metric RMSE, SSE, SSE/SST and training time (in seconds). Table \ref{tab:UCI_Stock_KEEL dataset results} 
 also includes the number of samples and features for ll the datasets. The performance evaluation based on the RMSE indicates that lower RMSE values correspond to better model performance.  The proposed LSTSVR-PI model achieves the least average RMSE value of $0.0879$, followed by LSTSVR, TSVR, TSVQR, TWSVR, and SVR+ with an average RMSE of $0.0881$, $0.0883$, $0.0931$, $0.1095$, and $0.1171$, respectively. These values indicate that the proposed model exhibits the best performance among all the models.  
 The average rank corresponding to the RMSE value for the proposed LSTSVR-PI is $2.14$, which is the least. Further, we calculated the average of SSE and SSE/SST metrics obtained across all the datasets corresponding to optimal parameters of RMSE. The average SSE value for the proposed LSTSVR-PI is $0.5267$, the second-best average SSE value is $0.5295$ for LSTSVR, then $0.5311$ for TSVR. The proposed LSTSVR-PI has an average rank of $2.29$, analogous to SSE which is the least among the models. The average rank corresponding to SSE/SST for the proposed LSTSVR-PI is $2.26$, then $2.55$ for TSVR. The average rank of the remaining models in increasing order are as follows: 3.1 for LSTSVR, 3.4 for TSVQR, 3.93 for TWSVR, and 5.76 for SVR+. Thus, the average ranks average RMSE, and SSE values depict the proposed LSTSVR-PI has superior performance.  

\begin{landscape}
\begin{footnotesize}
\begin{longtable}[t]{lccccccc}
\caption{RMSE values along with SSE, SSE/SST and training time across the real-world datasets for TSVR, TWSVR, LSTSVR, TSVQR, SVR+ and the proposed LSTSVR-PI.}
    \label{tab:UCI_Stock_KEEL dataset results}\\
\hline
Dataset &Metric &TSVR \cite{peng2010tsvr} &	TWSVR \cite{khemchandani2016twsvr} &	LSTSVR \cite{huang2016sparse} & TSVQR \cite{ye2023twin} & SVR+ \cite{vapnik2009new} & Proposed	LSTSVR-PI \\
(\#samples, \#features) &&&&&&&\\
\hline 
\endfirsthead
\multicolumn{8}{c}%
{{\bfseries \tablename\ \thetable{} -- continued from previous page}} \\
\hline
Dataset &Metric &TSVR \cite{peng2010tsvr} &	TWSVR \cite{khemchandani2016twsvr} &	LSTSVR \cite{huang2016sparse} & TSVQR \cite{ye2023twin} & SVR+ \cite{vapnik2009new} & Proposed	LSTSVR-PI \\ 
(\#samples, \#features) &&&&&&&\\
 \hline 
\endhead
\hline \multicolumn{8}{|r|}{{Continued on next page}} \\ \hline
\endfoot
\endlastfoot
Auto-price&RMSE&$0.084$&$0.0892$&$0.0858$&$0.0916$&$0.1661$&$0.0855$\\
(159,8)&SSE&$0.1596$&$0.1827$&$0.1663$&$0.1895$&$0.6381$&$0.1692$\\
&SSE/SST&$0.1695$&$0.1938$&$0.1779$&$0.1953$&$0.6678$&$0.1788$\\
&Time&$0.0075$&$0.0092$&$0.0032$&$0.009$&$0.0201$&$0.0039$\\
Daily\_Demand\_Forecasting\_Orders&RMSE&$0.0386$&$0.0444$&$0.0339$&$0.0388$&$0.1182$&$0.0359$\\
(60,6)&SSE&$0.0131$&$0.0177$&$0.0104$&$0.0129$&$0.134$&$0.011$\\
&SSE/SST&$0.0448$&$0.0606$&$0.0357$&$0.0444$&$0.4031$&$0.0378$\\
&Time&$0.0038$&$0.0018$&$0.0011$&$0.0095$&$0.0066$&$0.0022$\\
auto-original&RMSE&$0.1035$&$0.1272$&$0.104$&$0.1049$&$0.1548$&$0.1037$\\
(392,4)&SSE&$0.5891$&$0.89$&$0.5953$&$0.6075$&$1.3182$&$0.5929$\\
&SSE/SST&$0.2602$&$0.3689$&$0.2636$&$0.2676$&$0.5727$&$0.262$\\
&Time&$0.0281$&$0.0352$&$0.0138$&$0.0193$&$0.2344$&$0.0099$\\
bodyfat&RMSE&$0.0249$&$0.031$&$0.0266$&$0.026$&$0.1016$&$0.0265$\\
(252,7)&SSE&$0.0361$&$0.0445$&$0.0377$&$0.0369$&$0.3817$&$0.0372$\\
&SSE/SST&$0.038$&$0.0468$&$0.0397$&$0.0389$&$0.3938$&$0.0393$\\
&Time&$0.013$&$0.0164$&$0.0051$&$0.0133$&$0.0585$&$0.0066$\\
hungary chickenpox&RMSE&$0.0838$&$0.1047$&$0.0841$&$0.0823$&$0.0976$&$0.0848$\\
(522,10)&SSE&$0.5652$&$0.8193$&$0.5675$&$0.5419$&$0.7435$&$0.5787$\\
&SSE/SST&$0.72$&$0.8981$&$0.7338$&$0.6915$&$0.9422$&$0.7358$\\
&Time&$0.0495$&$0.0596$&$0.0172$&$0.0267$&$0.3751$&$0.0167$\\
istanbul stock exchange data&RMSE&$0.0747$&$0.0807$&$0.0752$&$0.0884$&$0.0967$&$0.0751$\\
(536,4)&SSE&$0.4211$&$0.4898$&$0.4272$&$0.6032$&$0.7093$&$0.4262$\\
&SSE/SST&$0.4139$&$0.4773$&$0.4193$&$0.498$&$0.6167$&$0.4183$\\
&Time&$0.0424$&$0.0373$&$0.0264$&$0.0256$&$0.357$&$0.0178$\\
qsar\_aquatic toxicity&RMSE&$0.1178$&$0.1177$&$0.1182$&$0.1205$&$0.1349$&$0.118$\\
(546,4)&SSE&$1.0672$&$1.069$&$1.0739$&$1.1196$&$1.4029$&$1.0685$\\
&SSE/SST&$0.5142$&$0.5119$&$0.5176$&$0.5292$&$0.6795$&$0.5146$\\
&Time&$0.0357$&$0.0411$&$0.0218$&$0.0201$&$0.4544$&$0.0166$\\
servo&RMSE&$0.2306$&$0.3206$&$0.2307$&$0.2604$&$0.2294$&$0.2282$\\
(167,2)&SSE&$1.288$&$2.4065$&$1.2823$&$1.6016$&$1.2539$&$1.2562$\\
&SSE/SST&$0.9578$&$1.0269$&$0.9703$&$0.9564$&$0.9587$&$0.9454$\\
&Time&$0.0057$&$0.0086$&$0.0025$&$0.0009$&$0.0217$&$0.003$\\
slump\_test&RMSE&$0.0499$&$0.053$&$0.0493$&$0.0502$&$0.1098$&$0.05$\\
(103,5)&SSE&$0.038$&$0.043$&$0.0386$&$0.04$&$0.1774$&$0.0392$\\
&SSE/SST&$0.0823$&$0.0932$&$0.0859$&$0.0868$&$0.3598$&$0.0845$\\
&Time&$0.0071$&$0.0013$&$0.0015$&$0.0111$&$0.015$&$0.0013$\\
wpbc&RMSE&$0.2027$&$0.383$&$0.2028$&$0.2463$&$0.2015$&$0.2001$\\
(194,17)&SSE&$1.1782$&$3.9919$&$1.1768$&$1.7166$&$1.1626$&$1.1495$\\
&SSE/SST&$1.0004$&$0.9998$&$1.0062$&$0.9964$&$1$&$0.9998$\\
&Time&$0.0092$&$0.0171$&$0.0051$&$0.0095$&$0.1974$&$0.0049$\\
yacht\_hydrodynamics&RMSE&$0.2522$&$0.2521$&$0.2522$&$0.2525$&$0.2564$&$0.2519$\\
(308,3)&SSE&$2.7814$&$2.779$&$2.7814$&$2.7852$&$2.881$&$2.7723$\\
&SSE/SST&$1$&$1$&$1.0006$&$1$&$1.0037$&$0.9977$\\
&Time&$0.017$&$0.0157$&$0.0078$&$0.0124$&$0.1063$&$0.0064$\\
autoMPG6&RMSE&$0.102$&$0.1159$&$0.1014$&$0.1023$&$0.154$&$0.1016$\\
(392,3)&SSE&$0.5817$&$0.7413$&$0.5687$&$0.5801$&$1.3189$&$0.5712$\\
&SSE/SST&$0.2622$&$0.3228$&$0.2566$&$0.2603$&$0.5894$&$0.2574$\\
&Time&$0.0202$&$0.0266$&$0.0111$&$0.015$&$0.1812$&$0.0133$\\
baseball&RMSE&$0.1566$&$0.1539$&$0.154$&$0.1547$&$0.1838$&$0.1531$\\
(337,8)&SSE&$1.1605$&$1.1193$&$1.121$&$1.1312$&$1.5987$&$1.108$\\
&SSE/SST&$0.5788$&$0.5592$&$0.5649$&$0.5677$&$0.7888$&$0.5541$\\
&Time&$0.0191$&$0.0158$&$0.0091$&$0.0134$&$0.1185$&$0.0068$\\
ele-1&RMSE&$0.1187$&$0.2024$&$0.1186$&$0.1195$&$0.1329$&$0.1188$\\
(495,1)&SSE&$0.9822$&$3.5617$&$0.9803$&$0.9953$&$1.231$&$0.9839$\\
&SSE/SST&$0.7295$&$1.9749$&$0.7398$&$0.7412$&$0.9115$&$0.7313$\\
&Time&$0.0312$&$0.0718$&$0.0236$&$0.0279$&$0.3444$&$0.0102$\\
machinCPU&RMSE&$0.0395$&$0.0386$&$0.0409$&$0.0413$&$0.0571$&$0.0399$\\
(209,3)&SSE&$0.0508$&$0.0488$&$0.0531$&$0.0596$&$0.1092$&$0.0575$\\
&SSE/SST&$0.2102$&$0.2025$&$0.2207$&$0.2448$&$1.0084$&$0.2399$\\
&Time&$0.0109$&$0.0026$&$0.0042$&$0.0138$&$0.043$&$0.0054$\\
IBM&RMSE&$0.0469$&$0.0519$&$0.0469$&$0.0473$&$0.0601$&$0.0469$\\
(750,3)&SSE&$0.0889$&$0.1083$&$0.0892$&$0.0908$&$0.1473$&$0.0893$\\
&SSE/SST&$0.221$&$0.2617$&$0.2222$&$0.2249$&$0.3632$&$0.2219$\\
&Time&$0.0147$&$0.0169$&$0.0086$&$0.0121$&$0.0999$&$0.0057$\\
NVDA&RMSE&$0.0197$&$0.0198$&$0.0198$&$0.0197$&$0.0283$&$0.0198$\\
(750,3)&SSE&$0.0162$&$0.0162$&$0.0163$&$0.016$&$0.0327$&$0.0163$\\
&SSE/SST&$0.3116$&$0.3129$&$0.3155$&$0.3092$&$0.5905$&$0.3136$\\
&Time&$0.0122$&$0.0145$&$0.007$&$0.0134$&$0.1222$&$0.008$\\
citigroup&RMSE&$0.0355$&$0.035$&$0.0349$&$0.0359$&$0.0477$&$0.035$\\
(750,3)&SSE&$0.0523$&$0.052$&$0.0506$&$0.0542$&$0.0921$&$0.0515$\\
&SSE/SST&$0.2476$&$0.2464$&$0.2597$&$0.2531$&$0.4125$&$0.2441$\\
&Time&$0.0252$&$0.0074$&$0.008$&$0.0117$&$0.1172$&$0.0046$\\
infosys&RMSE&$0.0205$&$0.0218$&$0.0201$&$0.0215$&$0.0392$&$0.0201$\\
(750,3)&SSE&$0.0169$&$0.0193$&$0.0163$&$0.0187$&$0.0626$&$0.0163$\\
&SSE/SST&$0.1063$&$0.1204$&$0.1024$&$0.1157$&$0.3818$&$0.1021$\\
&Time&$0.0122$&$0.0211$&$0.008$&$0.0114$&$0.1308$&$0.0054$\\
microsoft&RMSE&$0.0143$&$0.0141$&$0.0143$&$0.0143$&$0.034$&$0.014$\\
(750,3)&SSE&$0.0083$&$0.0082$&$0.0084$&$0.0083$&$0.0463$&$0.0081$\\
&SSE/SST&$0.0719$&$0.0702$&$0.0722$&$0.0717$&$0.3918$&$0.0698$\\
&Time&$0.0107$&$0.005$&$0.0062$&$0.0116$&$0.0988$&$0.0062$\\
wipro&RMSE&$0.0372$&$0.0419$&$0.0372$&$0.0371$&$0.0559$&$0.0368$\\
(750,3)&SSE&$0.0588$&$0.0742$&$0.0584$&$0.0588$&$0.1259$&$0.0579$\\
&SSE/SST&$0.1743$&$0.2172$&$0.1731$&$0.1743$&$0.3644$&$0.1715$\\
&Time&$0.0144$&$0.0217$&$0.007$&$0.0117$&$0.0911$&$0.0047$\\
\hline
Average & RMSE  & 0.0883&	0.1095&	0.0881&	0.0931&	0.1171&	0.0879\\
 & SSE & 0.5311&	0.8801&	0.5295&	0.5842&	0.7413&	0.5267\\
 & SSE/SST & 0.3864&	0.4745&	0.3874&	0.3937&	0.6381&	0.3867\\
Average rank & RMSE &2.43&	4.1&	2.81&	4&	5.52&	2.14\\
& SSE & 2.57 &	4.19	&2.62&	3.86	&5.48&	2.29\\
&SSE/SST & 2.55 &	3.93&	3.1&	3.4	&5.76&	2.26\\
\hline
\end{longtable}
\end{footnotesize}
\end{landscape}

\begin{figure}
    \subcaptionbox{qsar\_aquatic toxicity  \label{fig:qsar_aquatic toxicity}} { %
      \includegraphics[width=0.55\textwidth,height=4cm]{qsar\_aquatic\_toxicity.png}}
    \hfill 
    \subcaptionbox{servo\label{fig:servo}} { %
      \includegraphics[width=0.55\textwidth,height=4cm]{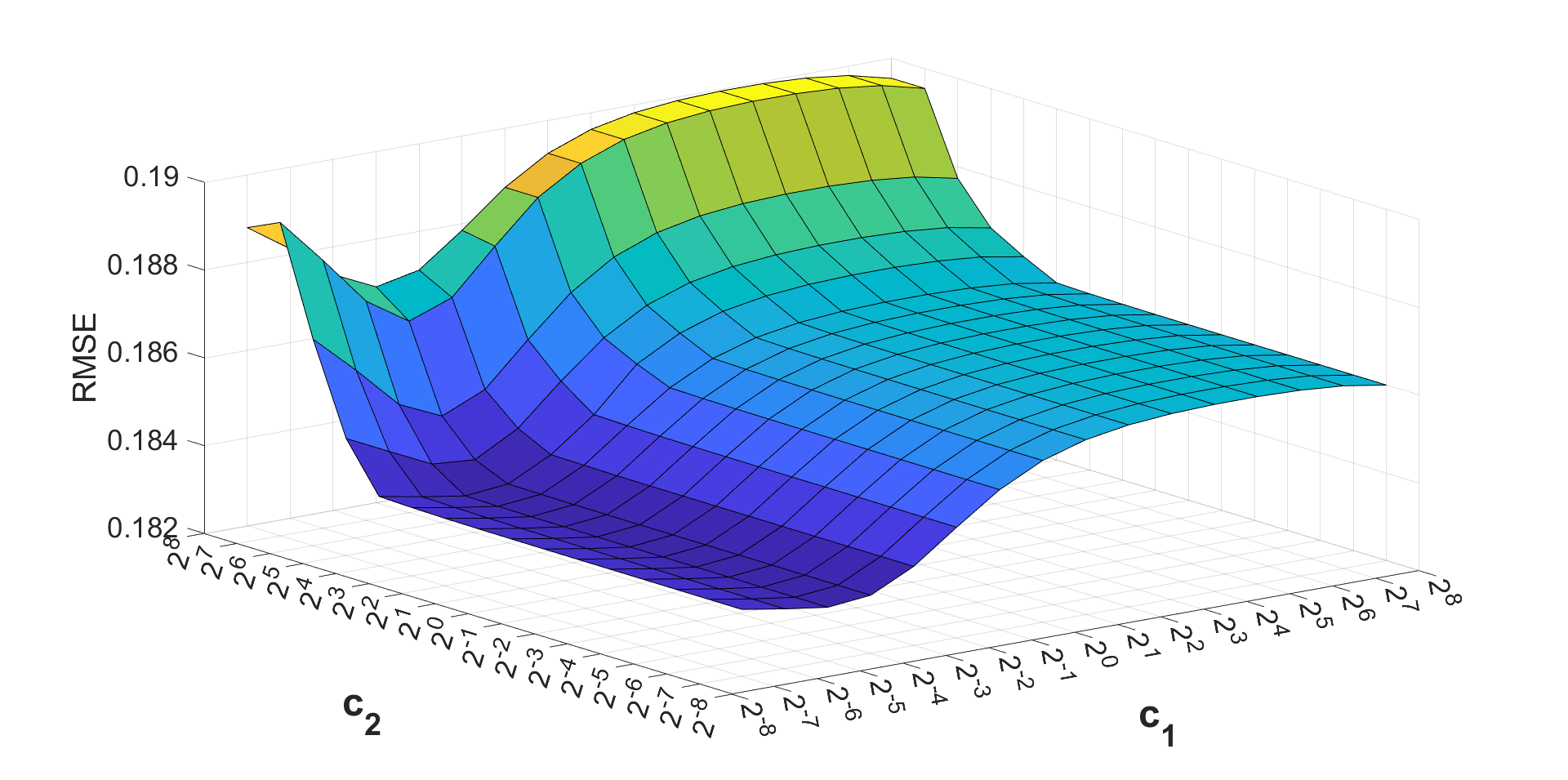}}
 
 \caption{Influence of parameters on the RMSE values of the proposed LSTSVR-PI.}
    \label{fig:sensitivity curve}
 \end{figure}

Figures \ref{fig:sensitivity curve} show the dependence of the RMSE values of the proposed LSTSVR-PI on the UCI datasets. Figure \ref{fig:qsar_aquatic toxicity} and \ref{fig:servo} denote the variation in RMSE values along with the hyperparameters $c_1$ and $c_2$. Other parameters are fixed at their optimal values. Hence, optimal RMSE values are highly dependent on the parameters.  

\subsection{Statistical analysis}{\label{subsec: statistical}}
In this subsection, we conduct a statistical assessment of the performance of the baseline models TSVR, TWSVR, LSTSVR, TSVQR, SVR+, and proposed LSTSVR-PI by employing Friedman test and Nemenyi post hoc test \cite{demvsar2006statistical}. By conducting the tests, we can thoroughly evaluate the models' performance, eliminating potential biases and enabling us to make general conclusions about their ability to generalize. The Friedman test statistically compares the models with the null hypothesis assumption that all models are equivalent and the Nemenyi test compares the models pairwise. Assuming $l$ and $n$ signify the number of models and number of datasets, respectively, the statistic is calculated using the relation:
\begin{align*}
    \chi_F^2=\frac{12n}{l(l+1)}\Bigg[ \sum _{i=1}^{l} r_i^2-\frac{l(l+1)^2}{4}\Bigg],
\end{align*}
where $r_i$ signifies the average rank of $i^{th}$ model. $\chi_F^2$ has $(l-1)$ degree of freedom. The undesirable conservative nature of Friedman statistics can be overcome by F-distribution \cite{demvsar2006statistical}.   It has degree of freedom $((l-1), (l-1)(n-1))$ and is obtained through the relation:
\begin{align*}
    F_F=\frac{(n-1)\chi_F^2}{n(l-1)-\chi_F^2}. 
\end{align*}
We analyze the models in a pairwise way by using the Nemenyi test which concludes that the models are significantly different if the difference in their ranks is greater than the critical difference \emph{(cd)}, where 
\begin{align*}
    cd=q_\alpha \sqrt{\frac{l(l+1)}{6n}}
\end{align*} 
with $q_\alpha=2.850$ at $5\%$ level of significance i.e., $\alpha=0.05$.\\
\textbf{Artificially generated datasets: } First, we statistically analyse the artificially generated datasets for which we have $l=6$ and $n=12$. On calculation using the average of ranks corresponding to RMSE, we obtain $\chi_F^2= 43.0135$ and $F_F=27.8544$ with degree of freedom $5$ and $(5,55)$, respectively. $F_{(5,55)}=2.3828$ $<$ $F_F$. Thus, the null hypothesis does not hold and the models are not equivalent. Further, we employ the Nemenyi test and calculate $cd=2.1767$.    
Corresponding to the average of ranks assigned using SSE values, we have $\chi_F^2= 43.3426$,  $F_F= 28.6221
$ and $cd=2.1767$. We observe the $F_F$ corresponding to average of the ranks assigned using SSE values is greater than $F_{(5,55)}$, thus the null hypothesis is rejected for SSE case. Table \ref{tab:Nemenyiposthoc_synthetic} represents the models which are significantly different from the proposed LSTSVR-PI using the Nemenyi test over artificially generated datasets for RMSE and SSE metrics. Table \ref{tab:Nemenyiposthoc_synthetic} infers that TWSVR and TSVQR are not significantly different than the proposed LSTSVR-PI, however, statistical tests on other datasets establish the significant difference of the model. \\
\begin{table}
\caption{Nemenyi post hoc significant difference for the proposed LSTSVR-PI with respect to the baseline models on artificially generated datasets corresponding to different metrics.}
\resizebox{1.0\textwidth}{!}{
    \begin{tabular}{|lccccc|}
    \hline 
    Metric &TSVR \cite{peng2010tsvr} & TWSVR \cite{khemchandani2016twsvr} & LSTSVR \cite{huang2016sparse} &  TSVQR \cite{ye2023twin} & SVR+ \cite{vapnik2009new}\\
    \hline
     RMSE &Yes &No& Yes &No & Yes \\
     SSE&Yes &No& Yes &No & Yes \\
         \hline
    \end{tabular}}    
    \label{tab:Nemenyiposthoc_synthetic}
\end{table}
\begin{table}
\caption{Nemenyi post hoc significant difference for the proposed LSTSVR-PI with respect to the baseline models on real-world datasets corresponding to different metrics.}
\resizebox{1.0\textwidth}{!}{
    \begin{tabular}{|lccccc|}
    \hline 
    Metric &TSVR \cite{peng2010tsvr} & TWSVR \cite{khemchandani2016twsvr} & LSTSVR \cite{huang2016sparse} &  TSVQR \cite{ye2023twin} & SVR+ \cite{vapnik2009new}\\
    \hline
         RMSE &No &Yes& No &Yes & Yes \\
     SSE &No& Yes &No & No&Yes \\
     SSE/SST &No& Yes &No & No&Yes \\
         \hline
    \end{tabular}}    
    \label{tab:Nemenyiposthoc_real_world}
\end{table}
%
\textbf{Real-world datasets: } For the numerical experiments on real-world datasets, we have $n=21$ and $l=6$. Thus degree of freedom of $\chi_F^2$ and $F_F$ are obtained as $5$ and $(5,100)$, respectively. At $5\%$ level of significance, $F_{(5,100)}=2.3053$.  Using the average ranks of the model based on the RMSE values depicted in Table \ref{tab:UCI_Stock_KEEL dataset results}, we calculate $\chi_F^2= 48.9660$, $F_F=17.4772$ and $cd=1.6454$. Corresponding to the average ranks assigned to the models based on SSE values, we have $\chi_F^2= 46.1970$, $F_F= 15.7125$, and $cd=1.6454$.  Emphasizing the average of the ranks assigned using SSE/SST values, we have $\chi_F^2= 47.4156$, $F_F=16.4682$ and $cd=1.6454$. 
For the three $F_F$ values obtained using RMSE, SSE, and SSE/SST metric, we have $F_F> F_{(5,100)}$, thus the null hypothesis of the Friedman test is rejected and models are not equivalence. Further Table \ref{tab:Nemenyiposthoc_real_world} shows which models are significantly different from the proposed LSTSVR-PI corresponding to the metric RMSE, SSE, SSE/SST. Table \ref{tab:Nemenyiposthoc_real_world} depicts TSVR and LSTSVR fails to be significantly different from the proposed LSTSVR-PI corresponding to the three metrics. However, statistical tests on different datasets prove the significant difference between the models. Thus, stronger numerical results and consistently favourable outcomes from statistical tests provide significant support for the claim that the proposed LSTSVR-PI performs better in generalization performance than the baseline models. 


\begin{table*}[h!]
 \centering
 \caption{\vspace{0.1mm}RMSE values along with SSE, SSE/SST and training time across the real-world time series datasets for TSVR, TWSVR, LSTSVR, TSVQR, SVR+ and the proposed LSTSVR-PI.} 
 \resizebox{12.5cm}{!}{
\label{tab:time_series_data}
\begin{tabular}{|lccccccc|}
\hline
Dataset &Metric &TSVR \cite{peng2010tsvr} &	TWSVR \cite{khemchandani2016twsvr} &	LSTSVR \cite{huang2016sparse} & TSVQR \cite{ye2023twin} & SVR+ \cite{vapnik2009new} & Proposed	LSTSVR-PI \\
(\#samples, \#features) & & & & & & &\\
  & & & & & & &\\
\hline
NNGC1\_dataset\_D1\_V1\_003&RMSE&$0.2272$&$0.1564$&$0.1569$&$0.2371$&$0.1657$&$0.1439$\\
(430,3)&SSE&$3.1422$&$1.4744$&$1.493$&$3.3962$&$1.6633$&$1.2493$\\
&SSE/SST&$0.62$&$0.7314$&$0.7362$&$0.8578$&$0.7734$&$0.6191$\\
&Time&$0.078$&$0.0459$&$0.0118$&$0.0295$&$0.3981$&$0.0237$\\
NNGC1\_dataset\_D1\_V1\_004&RMSE&$0.1629$&$0.1674$&$0.1687$&$0.2734$&$0.1724$&$0.1629$\\
(545,3)&SSE&$2.0459$&$2.1598$&$2.1962$&$5.9796$&$2.2944$&$2.0508$\\
&SSE/SST&$0.8326$&$0.8815$&$0.8915$&$0.9394$&$0.9204$&$0.8252$\\
&Time&$0.082$&$0.0557$&$0.0214$&$0.0586$&$0.7138$&$0.0571$\\
NNGC1\_dataset\_D1\_V1\_005&RMSE&$0.1274$&$0.1126$&$0.1186$&$0.1714$&$0.1234$&$0.1083$\\
(430,3)&SSE&$0.9772$&$0.7654$&$0.8482$&$1.7797$&$0.92$&$0.7111$\\
&SSE/SST&$0.2824$&$0.3117$&$0.3449$&$0.512$&$0.362$&$0.2897$\\
&Time&$0.0392$&$0.0672$&$0.0091$&$0.0348$&$0.2226$&$0.0451$\\
NNGC1\_dataset\_D1\_V1\_006&RMSE&$0.0895$&$0.0875$&$0.0899$&$0.0971$&$0.1055$&$0.0882$\\
(610,3)&SSE&$0.6848$&$0.6566$&$0.6914$&$0.8083$&$0.9524$&$0.6666$\\
&SSE/SST&$0.2711$&$0.264$&$0.2777$&$0.3217$&$0.3665$&$0.2679$\\
&Time&$0.0916$&$0.1702$&$0.0348$&$0.0736$&$0.7409$&$0.0795$\\
NNGC1\_dataset\_D1\_V1\_007&RMSE&$0.1274$&$0.1322$&$0.1269$&$0.2388$&$0.1363$&$0.125$\\
(610,3)&SSE&$1.3961$&$1.5047$&$1.3864$&$5.0833$&$1.5996$&$1.3414$\\
&SSE/SST&$0.6505$&$0.7312$&$0.6737$&$1.1004$&$0.7797$&$0.6503$\\
&Time&$0.0909$&$0.0792$&$0.0262$&$0.0874$&$0.9921$&$0.0452$\\
NNGC1\_dataset\_D1\_V1\_009&RMSE&$0.121$&$0.1229$&$0.1226$&$0.1365$&$0.1206$&$0.1179$\\
(540,3)&SSE&$1.11$&$1.1456$&$1.1389$&$1.4172$&$1.1032$&$1.0535$\\
&SSE/SST&$0.5401$&$0.5813$&$0.5778$&$0.686$&$0.5575$&$0.5354$\\
&Time&$0.0911$&$0.0912$&$0.0187$&$0.0264$&$0.7775$&$0.0823$\\
NNGC1\_dataset\_D1\_V1\_010&RMSE&$0.1432$&$0.1416$&$0.1423$&$0.2485$&$0.1438$&$0.1419$\\
(585,3)&SSE&$1.6854$&$1.6483$&$1.6631$&$5.0693$&$1.701$&$1.6537$\\
&SSE/SST&$0.7497$&$0.7493$&$0.7557$&$0.9095$&$0.7621$&$0.7506$\\
&Time&$0.1053$&$0.0349$&$0.0236$&$0.0637$&$1.0016$&$0.1168$\\
NNGC1\_dataset\_E1\_V1\_001&RMSE&$0.2218$&$0.2185$&$0.2219$&$0.3373$&$0.2201$&$0.2146$\\
(370,3)&SSE&$2.5633$&$2.5002$&$2.5672$&$5.9456$&$2.5204$&$2.4116$\\
&SSE/SST&$0.6469$&$0.6649$&$0.6846$&$0.8248$&$0.6739$&$0.6409$\\
&Time&$0.0428$&$0.0941$&$0.0096$&$0.0146$&$0.2745$&$0.1288$\\
NNGC1\_dataset\_E1\_V1\_008&RMSE&$0.1377$&$0.1297$&$0.1375$&$0.1713$&$0.1317$&$0.1353$\\
(740,3)&SSE&$1.9722$&$1.7544$&$1.9658$&$3.0604$&$1.8061$&$1.9128$\\
&SSE/SST&$0.7887$&$0.7201$&$0.8043$&$0.9674$&$0.744$&$0.788$\\
&Time&$0.149$&$0.0914$&$0.0342$&$0.0529$&$1.7184$&$0.0424$\\
NNGC1\_dataset\_E1\_V1\_009&RMSE&$0.1241$&$0.1127$&$0.1304$&$0.1662$&$0.1295$&$0.1327$\\
(740,3)&SSE&$1.6196$&$1.3371$&$1.7705$&$2.8728$&$1.7508$&$1.8535$\\
&SSE/SST&$0.71$&$0.5871$&$0.7752$&$0.9684$&$0.7633$&$0.8134$\\
&Time&$0.158$&$0.0843$&$0.0382$&$0.1167$&$1.6502$&$0.0427$\\
\hline
Average & RMSE & 0.1482 &	0.1382&	0.1416	&0.2078&	0.1449&	0.1371\\
 & SSE & 1.7197&	1.4947&	1.5721&	3.5412&	1.6311&	1.4904\\
 Average rank& RMSE & 3.55&	2.2&	3.6&	5.9&	3.9&	1.85\\
 & SSE & 3.5&	2.2	&3.6	&5.9	&3.9&	1.9\\
 &SSE/SST & 2.2	&2.5&	4.1&	5.9&	4.3&	2 \\
 \hline
\end{tabular}}
\end{table*}

\section{Application}{\label{sec:application}}
In this section, we present the results of numerical experiments conducted for the proposed LSTSVR-PI and the existing TSVR \cite{peng2010tsvr},  TWSVR \cite{khemchandani2016twsvr}, LSTSVR \cite{huang2016sparse}, TSVQR \cite{ye2023twin}, SVR+ \cite{vapnik2009new} on the time series real-world datasets \cite{derrac2015keel} to demonstrate the application of the proposed LSTSVR-PI model. 
We conducted random $70:30$ splitting for training and testing, respectively, and the ranges of different parameters considered are consistent with the experimental setup defined in Section \ref{sec:exp}. The datasets are normalized using min-max normalization shown in equation (\ref{eq:exp}).  Table \ref{tab:time_series_data} represents the performance of the models on the time series dataset along with the details of the datasets. The proposed LSTSVR-PI has the lowest average RMSE value of $0.1371$, the second lowest value is for TWSVR which is $0.1382$, followed by LSTSVR, SVR+, TSVR, and TSVQR with RMSE 0.1416, 0.1449, 0.1482, and 0.2078, respectively.  Using the optimal values corresponding to the least RMSE for each model across all datasets, we obtain the SSE and SSE/SST metrics. The average SSE is the smallest for the proposed LSTSVR-PI among all the models. As average RMSE or SSE values are not reliable always, we obtain the ranks of all the models across different datasets, considering different metrics. The average ranks corresponding to the RMSE values are 1.85, 2.2, 3.55, 3.6, 3.9, and 5.9 for proposed LSTSVR-PI, TWSVR, TSVR, LSTSVR, SVR+, and TSVQR, respectively. Considering SSE metric, the average rank assigned to the proposed LSTSVR-PI is $1.9$, which is the best among all the models.   \par
Employing the statistical tests i.e., Friedman and Nemenyi post hoc test as mentioned in subsection {\ref{subsec: statistical}}, we obtain $\chi_F^2=29.5571$ and $F_F=13.0126$ with degree of freedom $5$ and $(5,45)$, respectively. At $5\%$ level of significance, we have $F_{(5,45)}=2.422<F_F$, thus the null hypothesis of the Friedman test is rejected and models are not equivalent. Hence, the numerical results and statistical analysis prove the superiority of the proposed LSTSVR-PI. 


\section{Conclusion}{\label{sec:conclusion}}
In this article, we proposed least square twin support vector regression with privileged information (LSTSVR-PI) which considers the paradigm of learning using privileged information (LUPI) with the least square twin support vector regression. The proposed LUPI-based regression model constitutes the advantages of both. Most importantly, the LSTSVR-PI provides a learning process for the LSTSVR, wherein the extra privileged data is taken into account as a teacher in the training phase. Hence, the newly derived LSTSVR-PI bears an analogy to the teacher-student interaction seen in the human learning process. The proposed LSTSVR-PI considers the structural risk minimization principle by incorporating regularization terms corresponding to both the regressor function as well as the correction function, thus preventing overfitting. Consequently, the LSTSVR-PI has improved performance. Further LSTSVR-PI solves a system of linear equations, which adds to the efficiency. The nonlinear variant of the proposed LSTSVR-PI deals with the more intricate nonlinear relationships present in high-dimensional input data. We also established a generalization error bound based on the Rademacher complexity of the proposed model. Furthermore, to assess the performance of the proposed LSTSVR-PI, we conducted the experiments over artificially generated datasets and 21 real-world datasets. The numerical results and its statistical analysis demonstrate the superiority of the proposed model. As an application of the proposed LSTSVR-PI, we conducted the experiments on time-series datasets, which infers the proposed model is the best-performing one. However, it's worth noting that the proposed model has a large number of parameters to tune. This limitation can be overcome by incorporating different efficient tuning techniques to streamline the tuning process and improve the overall efficiency.
 

\section*{Acknowledgment}

We acknowledge Science and Engineering Research Board (SERB) under Mathematical Research Impact-Centric Support (MATRICS) scheme grant no. MTR/2021/000787 for supporting and funding the work. Ms. Anuradha Kumari (File no - 09/1022 (12437)/2021-EMR-I) would like to express her appreciation to the Council of Scientific and Industrial Research (CSIR) in New Delhi, India, for the financial assistance provided as fellowship. 
\bibliography{refs.bib}
\bibliographystyle{unsrtnat}
\end{document}